\documentclass{article}

\usepackage{microtype}
\usepackage{graphicx}
\usepackage{subcaption}
\usepackage{booktabs} 

\usepackage{hyperref}



\usepackage[preprint]{icml2026}


\usepackage{amsmath}
\usepackage{amssymb}
\usepackage{mathtools}
\usepackage{amsthm}

\usepackage{tikz}
\usepackage{subcaption}
\usepackage{color}

\usepackage[capitalize,noabbrev]{cleveref}

\theoremstyle{plain}
\newtheorem{theorem}{Theorem}[section]

\theoremstyle{definition}
\newtheorem{definition}[theorem]{Definition}

\theoremstyle{remark}

\usepackage[textsize=tiny]{todonotes}

\begin{document}

\twocolumn[
  \icmltitle{Variational Inference, Entropy, and Orthogonality: A Unified Theory of Mixture-of-Experts}



  \icmlsetsymbol{equal}{*}

  \begin{icmlauthorlist}
    \icmlauthor{Ye Su}{sch1,yyy}
    \icmlauthor{Yong Liu}{sch2}
  \end{icmlauthorlist}

  \icmlaffiliation{yyy}{University of Chinese Academy of Sciences, Beijing, China}
  \icmlaffiliation{sch1}{Shenzhen Institutes of Advanced Technology, Chinese Academy of Sciences, Shenzhen, China}
  \icmlaffiliation{sch2}{Gaoling School of Artificial Intelligence, Renmin University of China, Beijing, China}

  \icmlcorrespondingauthor{Yong Liu}{liuyonggsai@ruc.edu.cn}


  \vskip 0.3in
]



\printAffiliationsAndNotice{}  

\begin{abstract}
  Mixture-of-Experts models enable large language models to scale efficiently, as they only activate a subset of experts for each input. Their core mechanisms: Top-$k$ routing and auxiliary load balancing, remain heuristic, however, lacking a cohesive theoretical underpinning to support them. To this end, we build the first unified theoretical framework that rigorously derives these practices as optimal sparse posterior approximation and prior regularization from a Bayesian perspective. While simultaneously framing them as mechanisms to minimize routing ambiguity and maximize channel capacity from an information-theoretic perspective. We also pinpoint the inherent combinatorial hardness of routing, defining it as the NP-hard sparse subset selection problem. We rigorously prove the existence of a “Coherence Barrier”, when expert representations exhibit high mutual coherence, greedy routing strategies theoretically fail to recover the optimal expert subset. Importantly, we formally verify that imposing geometric orthogonality in the expert feature space is sufficient to narrow the divide between the NP-hard global optimum and polynomial-time greedy approximation. Our comparative analyses confirm orthogonality regularization as the optimal engineering relaxation for large-scale models. Our work offers essential theoretical support and technical assurance for a deeper understanding and novel designs of MoE.
\end{abstract}

\section{Introduction}

Currently, the scaling laws of dense Large Language Models (LLMs) confront an “impossible trinity”: a trade-off between performance, cost, and model size \citep{kaplan2020scaling,hoffmann2022training}. Yet as dense models grow to the trillion-parameter scale in pursuit of better performance, activating every parameter for each token drives computational costs to prohibitive levels \citep{brown2020language,fedus2022switch}. Mixture-of-Experts (MoE) architectures provide a feasible way out of this dilemma \citep{artetxe2021efficient,cai2025survey,mu2025comprehensive}. Through conditional computation, MoE separates model capacity from inference costs, letting models activate only a subset of experts (Top-$k$) for each input \citep{fedus2022review,gao2025mola,liu2025netmoe}. To maximize parameter utilization, MoE systems typically introduce load balancing objectives that encourage a more uniform routing of tokens across experts during pre-training \citep{fedus2022switch,pan2024dense}. MoE has proven highly effective in practice, matching or outperforming dense equivalents while slashing compute requirements considerably \citep{shazeer2017outrageously,du2022glam}. 

Yet for all these engineering gains already achieved, MoE’s theoretical foundations of these mechanisms remain underdeveloped \citep{hazimeh2021dselect,chen2022towards}. Current designs are largely driven by empirical exploration. While recent studies have begun to incorporate Bayesian perspectives into the MoE \citep{zhang2023similarity,dialameh2025bayesian,li2025bayesian}. However, these works primarily leverage variational inference to optimize novel routing architectures for specific tasks \citep{heins2024gradient,bohne2025mix}, rather than providing a rigorous theoretical derivation for the MoE models. Therefore, this leaves critical questions unanswered:

\textbf{\textit{Do these heuristics introduce inherent inductive biases that limit the model potential? }}

\textbf{\textit{Could establishing a rigorous theoretical foundation transform this current “alchemy” of trial and error into a principled science, thereby providing new insights for the architectural design and optimization of MoE?}}

To the above scientific questions, we offer an affirmative answer. We construct a unified theoretical framework for Top-$k$ routing and auxiliary load balancing from the dual perspectives of Bayesian inference and information theory. This framework not only clearly elucidates the intrinsic working mechanisms of both components, but also precisely characterizes the nature of the inductive biases inherent within them. Based on this theoretical framework, we formally define the routing task, selecting the optimal expert subset to reduce loss, as a sparse subset selection (SSS) problem \citep{hartmanis1982computers,natarajan1995sparse}. We prove that this problem is NP-hard problem. This leads to the identification of a fundamental failure mode that we term the \textbf{“Coherence Barrier”}: when expert representations are highly correlated, the greedy Top-$k$ strategy is guaranteed to fail, leading to representation collapse and suboptimal routing. To clarify this phenomenon, \textcolor{blue}{Figure \ref{fig:geometric_intuition}-a} exhibits collapsed expert representations that tend to highly redundant overlap. This redundancy means selecting the expert with the strongest scalar projection secures most of the signal, yet leaves a residual nearly orthogonal to other correlated experts, effectively cutting off further gains from the greedy strategy.

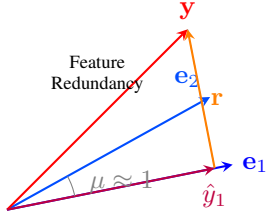
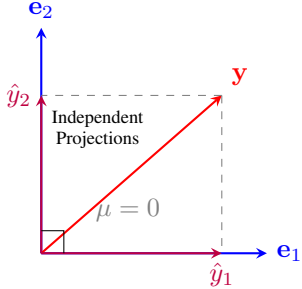
\begin{figure}[t]
    \centering
    \begin{subfigure}[b]{0.95\linewidth}
        \centering
        \begin{tikzpicture}[scale=3, >=stealth]
            \coordinate (O) at (0,0);
            \coordinate (E1) at (1, 0.2); 
            \coordinate (E2) at (0.9, 0.5); 
            \coordinate (Y) at (0.8, 0.8); 
            \coordinate (P1) at (0.92, 0.184); 
            
            \draw[->, thick, blue] (O) -- (E1) node[right] {$\mathbf{e}_1$};
            \draw[->, thick, blue!70!cyan] (O) -- (E2) node[above left] {$\mathbf{e}_2$};
            \draw[->, thick, red] (O) -- (Y) node[above] {$\mathbf{y}$};
            
            \draw[dashed, gray] (Y) -- (P1);
            \draw[->, thick, purple] (O) -- (P1) node[below, yshift=-2pt] {$\hat{y}_1$};
            \draw[->, thick, orange] (P1) -- (Y) node[midway, right] {$\mathbf{r}$};
            
            \draw[thin, gray] (0.3, 0.06) arc (11:29:0.3);
            \node[gray, right, xshift=2pt, yshift=2pt] at (0.3, 0.1) {$\mu \approx 1$};
            \node[align=center, font=\scriptsize] at (0.4, 0.6) {Feature\\Redundancy};
        \end{tikzpicture}
        \caption{\textbf{High coherence (Failure Mode)}: $\mathbf{e}_1$ and $\mathbf{e}_2$ are correlated. The residual $\mathbf{r}$ is nearly orthogonal to $\mathbf{e}_2$, masking its utility.}
        \label{fig:coherence_trap}
    \end{subfigure}
    
    \par\vspace{1em} 
    
    \begin{subfigure}[b]{0.95\linewidth}
        \centering
        \begin{tikzpicture}[scale=3, >=stealth]
            \coordinate (O) at (0,0);
            \coordinate (E1) at (1, 0); 
            \coordinate (E2) at (0, 1); 
            \coordinate (Y) at (0.8, 0.7); 
            \coordinate (P1) at (0.8, 0);
            \coordinate (P2) at (0, 0.7);
            
            \draw[->, thick, blue] (O) -- (E1) node[right] {$\mathbf{e}_1$};
            \draw[->, thick, blue] (O) -- (E2) node[above] {$\mathbf{e}_2$};
            \draw[->, thick, red] (O) -- (Y) node[above right] {$\mathbf{y}$};
            
            \draw[dashed, gray] (Y) -- (P1);
            \draw[dashed, gray] (Y) -- (P2);
            \draw[->, thick, purple] (O) -- (P1) node[below] {$\hat{y}_1$};
            \draw[->, thick, purple] (O) -- (P2) node[left] {$\hat{y}_2$};
            
            \draw (0.1,0) -- (0.1,0.1) -- (0,0.1);
            \node[gray, above right] at (0.2, 0.1) {$\mu = 0$};
            \node[align=center, font=\scriptsize] at (0.25, 0.55) {Independent\\Projections};
        \end{tikzpicture}
        \caption{\textbf{Geometric Orthogonality (Optimal)}: Experts form an orthogonal basis. The projection is decoupled, validating the Top-$k$ heuristic.}
        \label{fig:orthogonality_success}
    \end{subfigure}
    
    \caption{\textbf{The Geometry of Greedy Routing.} A visual demonstration of \textcolor{blue}{Theorem \ref{thm:coherence_barrier}} and \textcolor{blue}{Theorem \ref{thm:orthogonality_optimality}}. \textbf{(a)} High coherence creates a "blind spot" for greedy selection. \textbf{(b)} Orthogonality decouples the optimization problem.}
    \label{fig:geometric_intuition}
\end{figure}

To address this challenge, we propose the orthogonalization strategy (\textcolor{blue}{Figure \ref{fig:geometric_intuition}-b}) in theory, which effectively reduces the dimensionality of the routing problem from \textbf{NP-hard} to \textbf{P} and promises the effectiveness and stability of Top-$k$ routing. Guided by these theoretical findings, we design and implement three specific orthogonalization techniques, and systematically compare their respective advantages and applicable scenarios through theoretical analysis and experimental validation. To our knowledge, this work provides the first rigorous theoretical interpretation and formal analysis of the core technical modules of MoE models, and proposes practically viable optimization pathways grounded in theoretical insights, thereby offering essential theoretical support and technical assurance for a deeper understanding and novel designs of MoE architectures.

\section{Preliminaries}
\label{sec:preliminaries}

In this section, we formalize the standard architecture of MoE, specifically focusing on the widely adopted token-level Top-$k$ routing strategy and the auxiliary load balancing loss \citep{mu2025comprehensive,cai2025survey}.

\subsection{MoE Architecture}
Consider a Transformer-based MoE layer consisting of $N$ experts $\{E_1, \dots, E_N\}$, where each expert is a feed-forward network parameterized by $\theta_i$. Given an input token representation $x \in \mathbb{R}^d$, the MoE layer computes the output $y$ as the weighted sum of a sparse subset of expert outputs:

\begin{equation*}
    y = \sum_{i \in \mathcal{S}} g(x)_i \cdot E_i(x),
\end{equation*}

where $g(x) \in \mathbb{R}^N$ denotes the sparse gating weights (output of the router), and $\mathcal{S}$ represents the set of selected expert indices. Unlike dense models that activate all parameters for every input, MoE activates only a small subset $|\mathcal{S}| = k \ll N$ per token. This design enables large scaling of model parameters while maintaining constant inference costs \citep{hwang2024pre,xue2024openmoe}.

\subsection{Top-$k$ Routing Mechanism}
The core component governing conditional computation is the \textit{Router} (or Gating Network). It is typically implemented as a learnable linear projection $W_r \in \mathbb{R}^{N \times d}$ followed by a selection mechanism. Let $h = W_r x$ be the routing logits. The standard greedy Top-$k$ routing strategy selects the $k$ experts with the highest logit values:

\begin{equation*}
    \mathcal{S} = \text{Top-}k(h) = \{ i \mid \text{rank}(h_i) \le k \}.
\end{equation*}

The gating weights for the selected experts are typically computed via a Softmax normalization confined to the active set:

\begin{equation*}
    g(x)_i = 
    \begin{cases} 
    \frac{e^{h_i}}{\sum_{j \in \mathcal{S}} e^{h_j}} & \text{if } i \in \mathcal{S}, \\
    0 & \text{otherwise}
    \end{cases}.
\end{equation*}
In modern implementations (e.g., Switch Transformer, DeepSeek-V3), this routing decision is greedy and deterministic with respect to the logit magnitude \citep{shazeer2017outrageously,liu2024deepseek}.

\subsection{Auxiliary Load Balancing Loss}
A prevalent failure mode in training MoE is routing collapse, where the router converges to a degenerate solution that assigns all tokens to a small subset of experts, leaving others untrained (i.e., expert starvation). To mitigate this, a heuristic auxiliary loss ($\mathcal{L}_{\text{aux}}$) is added to the training objective \citep{fedus2022switch, lepikhin2020gshard}.

For a batch of $T$ tokens, let $\mathcal{S}^{(t)}$ denote the set of selected expert indices for the $t$-th token. Let $f_i$ be the fraction of tokens dispatched to expert $i$ (the actual selection frequency), and let $P_i$ be the average routing probability assigned to expert $i$ across the batch:
\begin{equation*}
    f_i = \frac{1}{T} \sum_{t=1}^T \mathbb{I}(i \in \mathcal{S}^{(t)}), \quad P_i = \frac{1}{T} \sum_{t=1}^T \text{softmax}(h^{(t)})_i.
\end{equation*}
The standard auxiliary loss is defined as the scaled dot product of these two vectors:
\begin{equation*}
    \mathcal{L}_{\text{aux}} = \alpha \cdot N \sum_{i=1}^N f_i \cdot P_i,
\end{equation*}
where $\alpha$ is a hyperparameter scaling the loss strength. Minimizing this objective encourages both the assignment distribution $f$ and the probability distribution $P$ to approach a uniform distribution $U(1/N, \dots, 1/N)$, thereby enforcing a balanced load across all experts. While empirically effective, this term is traditionally viewed as an ad-hoc regularizer rather than a theoretically derived component of the generative process.

\section{A Bayesian Perspective}
\label{sec:bayesian_perspective}

In this section, we analyze how Top-$k$ routing and load balancing can be interpreted within a unified Bayesian framework. Let $x \in \mathcal{X}$ denote the input and $y \in \mathcal{Y}$ denote the target. We introduce a latent variable $z \in \{1, \dots, E\}$ representing the expert index. The generative process of MoE is defined as $P_\theta(y|x) = \sum_{z} P_\theta(y|x, z)P(z)$.

Direct optimization of the log-likelihood is intractable due to the summation over latent states. We employ variational inference \citep{graves2011practical,blei2017variational} by introducing an inference network (router) $Q_\phi(z|x)$ to approximate the true posterior $P_\theta(z|x, y)$. The objective is to maximize the Evidence Lower Bound (ELBO):
\begin{equation}
    \begin{split}
        \mathcal{L}_{\text{ELBO}} =\; & \mathbb{E}_{z \sim Q_\phi(z|x)} [\log P_\theta(y|x, z)] \\
        & - D_{KL}(Q_\phi(z|x) || P(z)).
    \end{split}
    \label{eq:elbo_general}
\end{equation}

\subsection{Top-$k$ as Optimal Sparse Approximation}

Standard variational methods typically employ unrestricted categorical distributions for $Q_\phi$ \citep{jordan1999introduction,blei2017variational}. However, MoE requires sparsity for computational efficiency \citep{zhou2022mixture,huang2024toward}. We define the $k$-sparse simplex as the constrained variational family:
\begin{definition}[$k$-Sparse Variational Family]
    Let $\Delta^{E-1}$ be the probability simplex over $E$ experts. We define the $k$-sparse family $\mathcal{Q}_k \subset \Delta^{E-1}$ as:
    \begin{equation*}
        \mathcal{Q}_k = \{ q \in \Delta^{E-1} : \|q\|_0 \leq k \}.
    \end{equation*}
\end{definition}

We define the routing process as finding the optimal distribution $q^* \in \mathcal{Q}_k$ that minimizes the KL divergence to the unnormalized posterior logits $h(x)$ produced by the router.

\begin{theorem}[Optimality of Top-$k$ Routing]
    \label{thm:topk_optimality}
    Let $p(z|x)$ be an arbitrary categorical distribution over $E$ experts with probabilities $p_1, \dots, p_E$. Consider the constrained optimization problem:
    \begin{equation*}
        q^* = \operatorname*{arg\,min}_{q \in \mathcal{Q}_k} D_{KL}(q || p).
    \end{equation*}
    The unique solution $q^*$ is the renormalized truncation of $p$ to its $k$ largest elements. Specifically, let $\mathcal{I}_k$ be the set of indices corresponding to the top-$k$ values of $p$. Then:
    \begin{equation*}
        q^*_i = \begin{cases} 
        \frac{p_i}{\sum_{j \in \mathcal{I}_k} p_j} & \text{if } i \in \mathcal{I}_k \\
        0 & \text{otherwise}
        \end{cases}.
    \end{equation*}
\end{theorem}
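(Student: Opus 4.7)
The plan is to decouple the sparsity constraint from the probability simplex constraint by splitting the optimization into two nested problems: an outer combinatorial choice of support set $S \subseteq \{1,\dots,E\}$ with $|S|\le k$, and an inner convex problem of placing mass on that fixed support. This two-stage decomposition is natural because $\mathcal{Q}_k$ is a union of $\binom{E}{k}$ convex faces of $\Delta^{E-1}$, and on each face the KL divergence is a smooth convex function of the free coordinates.

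For the inner step I would fix $S$ and minimize $D_{KL}(q\|p)=\sum_{i\in S} q_i\log(q_i/p_i)$ subject to $\sum_{i\in S} q_i=1$ and $q_i\ge 0$. Forming the Lagrangian $\mathcal{L}=\sum_{i\in S} q_i\log(q_i/p_i)+\lambda(1-\sum_{i\in S} q_i)$ and setting $\partial_{q_i}\mathcal{L}=0$ yields $q_i = p_i\,e^{\lambda-1}$, and the normalization constraint forces the unique interior minimizer $q_i^\star(S)=p_i/\sum_{j\in S}p_j$. Strict convexity of $q\mapsto D_{KL}(q\|p)$ on the open face guarantees this is the global minimum for that support. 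Substituting back gives the very clean reduced objective
\begin{equation*}
\min_{q\in\Delta^{E-1},\mathrm{supp}(q)=S} D_{KL}(q\|p) \;=\; -\log\!\Bigl(\sum_{j\in S} p_j\Bigr),
\end{equation*}
so all dependence on the internal allocation has been eliminated.

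For the outer step, since $-\log$ is strictly decreasing, minimizing the reduced objective over sets $S$ with $|S|\le k$ is equivalent to maximizing the total mass $\sum_{j\in S} p_j$ under the cardinality constraint. This is a trivial combinatorial problem whose solution is to pick the $k$ indices with the largest $p_i$, i.e.\ $S^\star=\mathcal{I}_k$; note also that using fewer than $k$ indices can only shrink the captured mass (since all $p_i\ge 0$), so the cardinality constraint binds and $|S^\star|=k$. Combining with the inner solution produces the claimed renormalized truncation formula.

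The only genuinely delicate points, and what I would flag as the main obstacle, are the uniqueness claim and the boundary/degenerate cases. If $p$ has ties at the $k$-th order statistic, the index set $\mathcal{I}_k$ is not unique, and the theorem must be read as "unique up to tie-breaking in the support"; I would state this explicitly or assume a strict ordering $p_{(1)}>\cdots>p_{(k)}>p_{(k+1)}$. Similarly, if some $p_i=0$ then any $q$ with $q_i>0$ gives $D_{KL}=+\infty$, so the argument must be restricted to $S\subseteq\mathrm{supp}(p)$; this does not affect the conclusion because the top-$k$ entries of $p$ are automatically nonzero whenever $\|p\|_0\ge k$. With these caveats handled, strict convexity on each face plus the strict monotonicity of $-\log$ yield uniqueness of $q^\star$.
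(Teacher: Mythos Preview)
Your proposal is correct and follows essentially the same two-stage decomposition as the paper: fix the support, solve the inner problem by Lagrange multipliers to get the renormalized $p$ on $S$, reduce to $-\log\sum_{j\in S}p_j$, then maximize captured mass over $|S|\le k$. Your treatment is in fact more careful than the paper's, which silently assumes $p_i>0$ and does not discuss ties or why the cardinality constraint binds; your explicit handling of these edge cases is a strict improvement.
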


This theorem confirms that the router is not arbitrarily discarding information. Instead, it computes the optimal variational approximation within the computationally budget-constrained family $\mathcal{Q}_k$.

\begin{proof}
    Please refer to \textcolor{blue}{Appendix \ref{app:proof_theorem_1}} for a detailed proof.
\end{proof}

\subsection{Load Balancing as Marginal Prior Matching}

The second term in Eq. (\ref{eq:elbo_general}) is the prior regularization. We assume a uniform prior $P(z) = \mathcal{U}(1, E)$. Naively minimizing $D_{KL}(Q_\phi(z|x) || \mathcal{U})$ per sample forces the router to be non-informative. Instead, we constrain the aggregated posterior (marginal distribution) to match the prior.

Let $\bar{Q}(z) = \frac{1}{N} \sum_{i=1}^N Q_\phi(z|x_i)$ be the empirical marginal distribution of expert selection over a batch. We analyze the widely used auxiliary loss $\mathcal{L}_{\text{aux}} = E \sum_{j=1}^E f_j p_j$, where $f_j$ is the frequency of selection and $p_j$ is the average probability.

\begin{theorem}[Auxiliary Loss as Rényi Entropy Bound]
    \label{thm:aux_loss}
    Minimizing the standard auxiliary load balancing loss $\mathcal{L}_{\text{aux}}$ is equivalent to maximizing the collision entropy (Rényi entropy of order 2) of the aggregated posterior $\bar{Q}(z)$. Specifically:
    \begin{equation*}
        \mathcal{L}_{\text{aux}} \propto \exp(-H_2(\bar{Q}(z))),
    \end{equation*}
    where $H_2(P) = -\log \sum P_i^2$.
\end{theorem}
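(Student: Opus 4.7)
The plan is to identify both statistics $f_j$ and $p_j$ with the empirical marginal $\bar{Q}(z=j)$ of the variational posterior (up to a known sparsity-dependent constant), and then to collapse $\mathcal{L}_{\text{aux}}$ into the quadratic form $\sum_j \bar{Q}(z=j)^2$ that directly matches $\exp(-H_2(\bar{Q}))$ by definition.

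First I would observe that $p_j = \frac{1}{T}\sum_t \text{softmax}(h^{(t)})_j$ is, by construction, exactly the empirical marginal $\bar{Q}(z=j)$, so no work is required on that factor. The nontrivial identification is between the hard-selection frequency $f_j$ and this same marginal. Because the Top-$k$ rule is the optimal sparse projection of the soft posterior established in the preceding Top-$k$ optimality theorem, the indicator $\mathbb{I}(j \in \mathcal{S}^{(t)})$ is a consistent unbiased readout of the soft probability: its expectation under a stochastic Top-$k$ realization satisfies $\mathbb{E}[\mathbb{I}(j \in \mathcal{S}^{(t)})] = c_k \cdot p_j^{(t)}$ for a scalar $c_k$ depending only on the sparsity budget (equal to $k$ under the standard sampling relaxation). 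By the law of large numbers, $f_j \to c_k \cdot \bar{Q}(z=j)$ in the large-batch limit.

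Substituting both identifications gives $\mathcal{L}_{\text{aux}} = E \sum_j f_j p_j = E c_k \sum_j \bar{Q}(z=j)^2$. Invoking $H_2(P) = -\log \sum_i P_i^2$ rewrites this as $\mathcal{L}_{\text{aux}} = E c_k \cdot \exp(-H_2(\bar{Q}))$, and absorbing the sparsity- and batch-dependent factor $E c_k$ into the proportionality yields the claimed $\mathcal{L}_{\text{aux}} \propto \exp(-H_2(\bar{Q}(z)))$. Since $\exp(-H_2)$ is strictly decreasing in $H_2$, minimizing the auxiliary loss is then equivalent to maximizing the collision entropy of the aggregated posterior, driving $\bar{Q}$ toward the uniform prior in line with the marginal-matching interpretation motivating this section.

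The main obstacle is making the substitution $f_j \leftrightarrow c_k \bar{Q}(z=j)$ fully rigorous, since $\mathbb{I}(j \in \mathcal{S}^{(t)})$ is discrete and non-differentiable while $p_j$ is continuous and smooth. I would address this by working either (i) in expectation over random tie-breaking in Top-$k$, so that the indicator expectation is a monotone function of $p_j^{(t)}$, or (ii) in the large-$T$ concentration limit where $f_j$ sharpens around $c_k \bar{Q}(z=j)$. The delicate point is verifying that $c_k$ does not itself depend on the distribution $\bar{Q}$ at the operating regime considered, so that the proportionality claim holds uniformly along the optimization trajectory rather than only at the uniform fixed point.
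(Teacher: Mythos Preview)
Your proposal follows essentially the same route as the paper: identify both $f_j$ and $p_j$ with the aggregated marginal $\bar{Q}(z=j)$, collapse $\mathcal{L}_{\text{aux}}$ to $\sum_j \bar{Q}(z=j)^2$, and read off $\exp(-H_2(\bar{Q}))$ from the definition of collision entropy. The one difference is that the paper does not attempt your stochastic-sampling justification for $f_j \approx c_k\,\bar{Q}(z=j)$; it simply \emph{assumes} ``the training dynamics stabilize such that $f_j \approx p_j$'' (or invokes the differentiable approximation) and proceeds directly to $\mathcal{L}_{\text{aux}} \propto \sum_j p_j^2$. So the obstacle you flag---that $\mathbb{E}[\mathbb{I}(j\in\mathcal{S}^{(t)})]$ is not in general $c_k\,p_j^{(t)}$ with a distribution-independent $c_k$---is real, but the paper sidesteps it by fiat rather than resolving it; your extra machinery is neither needed nor expected here.
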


This theorem confirms that minimizing $\mathcal{L}_{\text{aux}}$ is mathematically equivalent to maximizing the entropy of the marginal expert distribution. Since the uniform distribution maximizes all Rényi entropies, this optimization effectively pushes the aggregated posterior $\bar{Q}(z)$ towards the uniform prior $\mathcal{U}(1, E)$. Therefore, load balancing is theoretically justified as a necessary prior constraint that prevents posterior collapse, a common failure mode in variational learning where the model ignores the latent code and under-utilizes its effective capacity.

\begin{proof}
    Please refer to \textcolor{blue}{Appendix \ref{app:proof_theorem_2}} for a detailed proof.
\end{proof}

In summary, our Bayesian framework provides a rigorous derivation of those architectural components often brushed aside in prior literature as mere heuristic constructs. For the Top-$k$ routing operator in particular, we demonstrate that it is by no means an off-the-cuff design choice. Instead, it aligns with the exact solution of the variational optimization problem when operating under sparse manifold constraints. Meanwhile, $\mathcal{L}_{\text{aux}}$ acts exactly as a critical prior regularization in the aggregated posterior. Crucially, these two core mechanisms together empower the router functions as an effective amortized inference network, striking the inherent trade-off between accurate sparse approximation (expert specialization) and prior matching (expert utilization) to maximize the ELBO.

\section{An Information-Theoretic Perspective}
\label{sec:info_theory}

In this section, we introduce a complementary perspective on routers, framing it as a discrete communication channel. We model the router as a stochastic encoder $\mathcal{E}: \mathcal{X} \to \mathcal{Z}$, responsible for explicitly mapping high-dimensional input $X$ to discrete expert assignment indices $Z \in \{1, \dots, E\}$. From this viewpoint, the MoE model’s core goal boils down to maximizing the information flow between input data and expert selection. In mathematical terms, this amounts to maximizing MI $I(X; Z)$. We decompose it into \citep{cover1999elements,tishby2000information}:
\begin{equation}
    \max I(X; Z) = \max \big( \underbrace{H(Z)}_{\text{Utilization}} - \underbrace{H(Z|X)}_{\text{Ambiguity}} \big),
    \label{eq:mutual_info}
\end{equation}
where $H(Z)$ is the marginal entropy (channel utilization), and $H(Z|X)$ represents the conditional entropy (routing ambiguity/noise).

\subsection{Top-$k$ as Noise-Filtering Bottleneck}
For Eq. (\ref{eq:mutual_info}), the second term corresponds to negative conditional entropy. To maximize MI, we must minimize $H(Z|X)$. High conditional entropy points to a “noisy” channel where the router lacks decisiveness, spreading probability mass across numerous experts for a single input. Such ambiguity undermines the specialization of individual experts. 

\begin{theorem}[Top-$k$ Strictly Bounds Routing Ambiguity]
\label{thm:cond_entropy}
Let $p(z|x)$ be the dense output distribution of the gating network (e.g., Softmax) with support size $E$. Let $q_k(z|x)$ be the renormalized distribution after applying the Top-$k$ hard-selection operator. For any sparsity level $k < E$, the Top-$k$ operator acts as a noise-filtering bottleneck that strictly bounds the conditional entropy:
\begin{equation*}
    H(q_k(Z|X)) \le \log k \ll \log E.
\end{equation*}
Compared to the dense upper bound $\sup H(p) = \log E$, the Top-$k$ projection reduces the maximum potential routing ambiguity by a factor of $\frac{\log E}{\log k}$. Specifically, as $k/E \to 0$ (high sparsity regime), the router approaches a deterministic mapping function, effectively minimizing the noise term $H(Z|X)$ in the MI objective.
\end{theorem}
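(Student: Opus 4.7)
The plan is to reduce the claim to the classical maximum-entropy bound on distributions with constrained support, exploiting the fact that the Top-$k$ operator is, by construction, a hard projection onto the sparse face $\mathcal{Q}_k$ of the simplex introduced in Theorem~\ref{thm:topk_optimality}. Fixing an arbitrary input $x$, let $\mathcal{I}_k(x)$ denote the index set returned by $\mathrm{Top}$-$k(h(x))$. By definition of the renormalized Top-$k$ distribution, $q_k(z|x) = 0$ for every $z \notin \mathcal{I}_k(x)$, so $|\mathrm{supp}(q_k(\cdot|x))| \le k$ regardless of the particular logits produced by the router. This is the only structural fact I would need from the Top-$k$ construction.

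The second step is to invoke the standard entropy inequality: for any discrete distribution $q$ with support size $m$, concavity of $-t\log t$ (equivalently, Jensen's inequality, or a direct Lagrange-multiplier calculation on the simplex) yields $H(q) \le \log m$, with equality iff $q$ is uniform on its support. Plugging in $m = k$ gives the pointwise bound $H(q_k(\cdot|x)) \le \log k$ for every $x$, and integrating against the data distribution preserves the bound on the conditional entropy $H(Z|X) = \mathbb{E}_X\!\left[H(q_k(\cdot|X))\right] \le \log k$. The comparison factor $\log E/\log k$ against the dense baseline then follows immediately: the unconstrained softmax lives on the full simplex $\Delta^{E-1}$, whose maximum entropy is $\log E$ (attained at the uniform distribution), so the ratio quantifies precisely the shrinkage of the ambiguity envelope imposed by the hard projection, and the strict separation $\log k \ll \log E$ is simply the sparsity assumption $k \ll E$.

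The asymptotic remark about approaching a deterministic mapping as $k/E \to 0$ is then a corollary in two readings: in the extreme case $k=1$ the support is a singleton and $H(q_1(\cdot|x))=0$ identically, while for general small $k$ the upper bound $\log k$ vanishes in \emph{relative} terms against the dense envelope $\log E$. Honestly, the main obstacle here is not technical but expository---the result is essentially a direct consequence of the support-cardinality bound on entropy, and the only subtlety worth flagging is that the bound must be established pointwise in $x$ before averaging, so that both $H(Z|X=x)$ and the averaged $H(Z|X)$ inherit the same ceiling. I would therefore keep the proof short and conceptual, underscoring that the noise-filtering interpretation stems from support shrinkage induced by the Top-$k$ projection rather than from any finer distributional property of the logits themselves.
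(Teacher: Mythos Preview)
Your proposal is correct and follows essentially the same route as the paper: fix $x$, observe that the Top-$k$ operator restricts the support of $q_k(\cdot|x)$ to at most $k$ indices, apply the standard maximum-entropy bound $H(q)\le\log|\mathrm{supp}(q)|$ to obtain the pointwise inequality, then average over the data distribution and compare against the dense baseline $\log E$. The paper's proof is structured identically, including the explicit remark that the pointwise bound must precede the expectation.
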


This theorem establishes that Top-$k$ is not merely a computational heuristic but an entropy minimization operator. By explicitly truncating the tail of the distribution (the “noise” from irrelevant experts), it enforces a strict information bottleneck, ensuring that the conditional entropy budget is capped at $\log k$.

\begin{proof}
    Please refer to \textcolor{blue}{Appendix \ref{app:proof_thm_topk_entropy}} for a detailed proof.
\end{proof}

\subsection{Load Balancing as Mutual Information Maximization}
\label{subsec:load_balance_mi_max}

With $H(Z|X)$ bounded by the Top-$k$ operator, maximizing the MI in Eq.~(\ref{eq:mutual_info}) reduces to maximizing the first term: the marginal entropy $H(Z)$.

\begin{theorem}[Auxiliary Loss Maximizes MI Lower Bound]
\label{thm:capacity}
Let $\bar{Q}(z) = \frac{1}{N} \sum_{i=1}^N q_k(z|x_i)$ be the aggregated posterior. The channel capacity of the expert system is upper-bounded by $C_{max} = \log E$. Minimizing the auxiliary load balancing loss $\mathcal{L}_{\text{aux}}$ is mathematically equivalent to minimizing the KL-divergence between $\bar{Q}$ and the uniform prior $\mathcal{U}$, which is the necessary and sufficient condition to maximize $H(Z)$:
\begin{equation*}
    \min \mathcal{L}_{\text{aux}} \iff \bar{Q} \to \mathcal{U} \iff H(Z) \to \log E.
\end{equation*}
Combining this with \textcolor{blue}{Theorem \ref{thm:cond_entropy}}, the joint optimization of Top-$k$ routing and $\mathcal{L}_{\text{aux}}$ maximizes a tight lower bound on the Mutual Information:
\begin{equation*}
    I(X; Z) \ge \log E - \log k.
\end{equation*}
\end{theorem}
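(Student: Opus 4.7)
\textbf{Proof proposal for Theorem \ref{thm:capacity}.}
The plan is to chain together three building blocks: (i) the identity from Theorem \ref{thm:aux_loss} relating $\mathcal{L}_{\text{aux}}$ to the Rényi-2 entropy of $\bar{Q}$, (ii) the standard fact that the uniform distribution is the unique maximizer of every Rényi entropy on the simplex, and (iii) the bound $H(Z\mid X)\le\log k$ from Theorem \ref{thm:cond_entropy}. Assembled in this order, these give the chain of equivalences and then the lower bound on $I(X;Z)$.

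First, I would invoke Theorem \ref{thm:aux_loss} to write $\mathcal{L}_{\text{aux}}\propto\exp(-H_2(\bar{Q}))$, so that $\mathcal{L}_{\text{aux}}$ is a strictly decreasing function of $H_2(\bar{Q})$. Next, I would apply Jensen's inequality (or equivalently the log-sum inequality) to the convex function $t\mapsto t^2$ on the simplex $\Delta^{E-1}$ to obtain $\sum_z \bar{Q}(z)^2\ge 1/E$, with equality iff $\bar{Q}=\mathcal{U}$. This establishes the first equivalence $\min \mathcal{L}_{\text{aux}} \iff \bar{Q}\to\mathcal{U}$. The second equivalence, $\bar{Q}\to\mathcal{U}\iff H(Z)\to\log E$, follows immediately from the identity
\begin{equation*}
D_{KL}(\bar{Q}\,\|\,\mathcal{U}) = \log E - H(\bar{Q}),
\end{equation*}
together with non-negativity of KL divergence and its strict convexity in the first argument, which pin down the uniform distribution as the unique Shannon-entropy maximizer. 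Here I would briefly remark that although $H_2$ and $H$ differ as functionals, they share the unique maximizer on the simplex, which is precisely what the equivalences assert.

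For the final lower bound, I would expand $I(X;Z)=H(Z)-H(Z\mid X)$ using the standard decomposition of mutual information for discrete $Z$. The conditional entropy term satisfies $H(Z\mid X)=\mathbb{E}_X[H(q_k(\,\cdot\mid X))]\le\log k$ by Theorem \ref{thm:cond_entropy}, since each per-token distribution has support of size at most $k$ and Shannon entropy on a $k$-element support is bounded above by $\log k$ (the expectation then preserves this bound). Combining with $H(Z)\to\log E$ from the first half yields $I(X;Z)\ge\log E-\log k$, which is the claimed lower bound, and it is tight when $\bar{Q}=\mathcal{U}$ and each $q_k(\,\cdot\mid x)$ is uniform on its active support.

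The main obstacle, and the step I would be most careful about, is the first equivalence: the auxiliary loss is proportional to $\exp(-H_2)$, not $\exp(-H)$, so the equivalence with $D_{KL}(\bar{Q}\,\|\,\mathcal{U})$ must be argued at the level of minimizers rather than as a pointwise identity. I would make this precise by showing that both $-H_2$ and $D_{KL}(\cdot\,\|\,\mathcal{U})$ are strictly convex on $\Delta^{E-1}$ with a common unique minimizer at $\mathcal{U}$, and that their sublevel sets are nested around $\mathcal{U}$, so that gradient descent on one drives $\bar{Q}$ toward the same fixed point as the other. A secondary subtlety is that $\bar{Q}$ is an empirical average over a finite batch, so strictly speaking $\bar{Q}\to\mathcal{U}$ should be read as convergence in the large-batch limit or as the population analogue; I would flag this as a mild abuse of notation rather than a gap in the argument.
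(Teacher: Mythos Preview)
Your proposal is correct and follows essentially the same route as the paper: both invoke Theorem~\ref{thm:aux_loss} to reduce $\mathcal{L}_{\text{aux}}$ to the collision probability $\sum_z \bar{Q}(z)^2$, establish its minimum at the uniform distribution via a convexity argument (the paper uses Cauchy--Schwarz where you use Jensen, which are equivalent here), and then combine $H(Z)\to\log E$ with the bound $H(Z\mid X)\le\log k$ from Theorem~\ref{thm:cond_entropy} in the MI decomposition. Your additional care about the $H_2$ versus $H$ distinction at the level of minimizers and the finite-batch caveat are points the paper glosses over, so if anything your treatment is slightly more rigorous.
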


This theorem confirms that $\mathcal{L}_{\text{aux}}$ is no mere heuristic tossed in to combat expert starvation, but a core requirement for maximizing the router’s full information transmission capacity. By imposing this uniform marginal distribution, we ensure the router taps into the complete set of available experts, not just a narrow fraction. Fail to maximize $H(Z)$, and the MoE is set up for \textbf{routing collapse}: the router comes to rely solely on a small subset of experts, pulling $H(Z)$ well below $\log E$. This clamps down on MI $I(X; Z)$, regardless of how well the router is specialized in processing individual inputs.

\begin{proof}
    Please refer to \textcolor{blue}{Appendix \ref{app:proof_thm_capacity}} for a detailed proof.
\end{proof}

In summary, the two engineering pillars of MoE align with the decomposition of MI. Top-$k$ routing minimizes $H(Z|X)$ (Specialization), while load balancing maximizes $H(Z)$ (Utilization). Together, they optimize the information transmission from data to experts.

\section{The Combinatorial Cliff: From Theory to Reality}
\label{sec:combinatorial_cliff}

While the Bayesian and information-theoretic perspectives justify the mechanisms of MoE, they rely on an implicit assumption: that the router's local ranking accurately reflects the joint contribution of the selected experts. In reality, minimizing the task loss requires solving a combinatorial optimization problem. In this section, we formulate this problem and rigorously prove why standard greedy routing fails in the presence of expert correlation.

\subsection{The Hidden Optimization Problem}

Consider a single input $x$. Let $\mathbf{E} = [E_1(x), \dots, E_E(x)] \in \mathbb{R}^{d \times E}$ denote the matrix of expert outputs, each entry corresponding to an expert’s output for $x$. Crucially, we treat this matrix as a fixed feature basis, abstracting away the experts' internal non-linearity to focus on the linear aggregation logic. Our goal is to pick a sparse weighting vector $\boldsymbol{\alpha} \in \mathbb{R}^E$, which approximates the optimal target representation $y$. This is formally the SSS problem:
\begin{equation*}
    \min_{\boldsymbol{\alpha}} \| y - \mathbf{E}\boldsymbol{\alpha} \|_2^2 \quad \text{s.t.} \quad \|\boldsymbol{\alpha}\|_0 \leq k.
\end{equation*}

\begin{theorem}[Intractability of Optimal Routing]
    \label{thm:np_hard}
    The Sparse Subset Selection problem for MoE routing is NP-hard. Specifically, for a generic matrix $\mathbf{E}$ and target $y$, determining the subset of $k$ columns that minimizes the residual error cannot be solved in polynomial time (unless P=NP).
\end{theorem}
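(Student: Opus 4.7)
The plan is to establish NP-hardness by polynomial-time reduction from Exact Cover by 3-Sets (X3C), a classical NP-complete problem, following the template of \citep{natarajan1995sparse}. It suffices to prove hardness of the zero-error decision version of SSS: given $\mathbf{E}\in\mathbb{R}^{d\times E}$, $y\in\mathbb{R}^d$, and $k\in\mathbb{N}$, does there exist $\boldsymbol{\alpha}$ with $\|\boldsymbol{\alpha}\|_0\le k$ and $\mathbf{E}\boldsymbol{\alpha}=y$? Any polynomial-time algorithm for the optimization form could decide this by checking whether the minimum residual is exactly zero, transferring hardness to the formulation in the theorem.

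First, I would construct the reduction. Given an X3C instance with ground set $U=\{u_1,\dots,u_{3m}\}$ and a collection $\mathcal{C}=\{C_1,\dots,C_n\}$ of 3-element subsets of $U$, I build the incidence matrix $\mathbf{E}\in\{0,1\}^{3m\times n}$ whose $j$-th column is the indicator vector of $C_j$, set the target $y=\mathbf{1}_{3m}$, and set the sparsity budget $k=m$. This mapping is evidently computable in time polynomial in the encoding of the X3C instance.

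Next, I would prove the equivalence of solutions. The forward direction ($\Leftarrow$) is immediate: an exact cover indexed by $J\subset\{1,\dots,n\}$ yields $\boldsymbol{\alpha}$ with $\alpha_j=1$ on $J$ and zero elsewhere, which satisfies $\mathbf{E}\boldsymbol{\alpha}=\mathbf{1}_{3m}$ and $\|\boldsymbol{\alpha}\|_0=m$. The reverse direction ($\Rightarrow$) is where the combinatorial content lies: assuming $\boldsymbol{\alpha}$ with $\|\boldsymbol{\alpha}\|_0\le m$ and $\mathbf{E}\boldsymbol{\alpha}=\mathbf{1}_{3m}$, let $S=\text{supp}(\boldsymbol{\alpha})$; since every coordinate of $\mathbf{E}\boldsymbol{\alpha}$ is nonzero, every element of $U$ must lie in some $C_j$ with $j\in S$, yet the columns indexed by $S$ collectively contain only $3|S|\le 3m$ ones. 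Saturating this bound forces $|S|=m$ and zero pairwise overlap among $\{C_j\}_{j\in S}$, i.e., an exact cover; substituting this back into $\mathbf{E}\boldsymbol{\alpha}=\mathbf{1}_{3m}$ then pins down $\alpha_j=1$ for every $j\in S$.

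The main obstacle, I expect, is this reverse direction: a priori $\boldsymbol{\alpha}$ is unconstrained in sign and magnitude, so one must rule out fractional, negative, or otherwise non-combinatorial coefficients producing an exact sparse reconstruction without corresponding to a valid cover. The counting argument above closes the gap through the rigid $\{0,1\}$-structure of $(\mathbf{E},y)$ together with the uniform subset size $|C_j|=3$; any attempt to generalize the hardness claim to the approximate-recovery regime ($\epsilon>0$) would demand a quantitative refinement, typically via restricted isometry or incoherence conditions on $\mathbf{E}$, which foreshadows the \emph{Coherence Barrier} analyzed in the remainder of the section.
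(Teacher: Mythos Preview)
Your proposal is correct and follows the same reduction from X3C used in the paper: incidence matrix $\mathbf{E}\in\{0,1\}^{3m\times n}$, target $y=\mathbf{1}_{3m}$, sparsity $k=m$, with the same forward direction. Your reverse direction differs slightly in flavor---you use a pure counting/inclusion--exclusion argument (coverage of $U$ forces $3|S|\ge 3m$, while sparsity gives $3|S|\le 3m$, so equality and hence disjointness), whereas the paper first derives the mass constraint $\sum_j\alpha_j=m$ by summing rows and then argues by contradiction that overlaps would force inconsistent coefficient values; your version is arguably cleaner since it deduces the combinatorial structure of the support before touching the coefficient values at all, sidestepping the need to reason directly about fractional or sign-indefinite $\alpha_j$.
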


\begin{proof}
    Please refer to \textcolor{blue}{Appendix \ref{app:proof_theorem_np_hard}} for a detailed proof.
\end{proof}

Current MoE architectures bypass this combinatorial search by using a greedy strategy: the router estimates a scalar score $s_i \approx \langle y, E_i \rangle$ for each expert independently and selects the Top-$k$ indices.

\subsection{The Failure Mode: The Coherence Barrier}

\textit{\textbf{When does this greedy approximation succeed?}} We analyze this using the concept of mutual coherence from Compressed Sensing.

\begin{definition}[Mutual Coherence]
    Let the expert outputs be normalized such that $\|E_i\|_2 = 1$. The mutual coherence $\mu(\mathbf{E})$ measures the maximum similarity between any distinct pair of experts:
    \begin{equation*}
        \mu(\mathbf{E}) = \max_{1 \leq i \neq j \leq E} | \langle E_i, E_j \rangle |.
    \end{equation*}
\end{definition}

A high coherence $\mu$ implies that experts are redundant or highly correlated. The following theorem establishes the condition under which the greedy router is mathematically guaranteed to fail.

\begin{theorem}[The Coherence Barrier for Greedy Routing]
    \label{thm:coherence_barrier}
    Let $S^*$ be the optimal set of experts that minimizes the loss, and let $S_G$ be the set selected by the greedy Top-$k$ router. The greedy strategy is guaranteed to recover the optimal subset (i.e., $S_G = S^*$) if and only if the mutual coherence satisfies the condition:
    \begin{equation*}
        \mu(\mathbf{E}) < \frac{1}{2k - 1}.
    \end{equation*}
    Conversely, if the experts exhibit high correlation such that $\mu(\mathbf{E}) \geq \frac{1}{2k - 1}$, there exist target signals $y$ for which the greedy router selects a strictly suboptimal subset, resulting in a non-zero optimality gap:
    \begin{equation*}
        \mathcal{L}(S_G) - \mathcal{L}(S^*) > 0.
    \end{equation*}
\end{theorem}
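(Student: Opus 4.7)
The plan is to recognize this as essentially the classical Exact Recovery Condition for Orthogonal Matching Pursuit from the compressed sensing literature (Tropp, Donoho--Huo), recast in the MoE routing setting. First I would formalize the greedy Top-$k$ procedure as iterative projection pursuit: at step $t$, with current residual $r_t = y - \mathbf{E}_{S_t}\boldsymbol{\alpha}_t$, the router selects $i_{t+1} = \arg\max_i |\langle r_t, E_i\rangle|$ and updates by least-squares projection onto the chosen subspace. Under this reformulation, the greedy recovery question reduces to asking when, at every step $t < k$, the maximum correlation over the optimal support $S^*$ strictly dominates the maximum correlation over its complement.

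For the sufficiency direction, I would bound the off-support correlations using mutual coherence. Suppose at step $t$ the residual admits the expansion $r_t = \sum_{j \in S^* \setminus S_t} \beta_j E_j$ (which follows from the projection step whenever $S_t \subset S^*$). For an off-support index $\ell \notin S^*$, the triangle inequality gives $|\langle r_t, E_\ell\rangle| \le \mu \sum_{j \in S^* \setminus S_t} |\beta_j|$, while for a correct index one can lower-bound $\max_{j \in S^* \setminus S_t} |\langle r_t, E_j\rangle| \ge (1 - (k-1)\mu)\max_j |\beta_j|$ via a Gershgorin-type argument on the Gram submatrix. Comparing these two bounds, a strict separation $(1-(k-1)\mu) > k\mu$ is equivalent to the coherence condition $\mu < 1/(2k-1)$, which inductively guarantees that greedy only ever picks from $S^*$ and therefore $S_G = S^*$.

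For the necessity direction I would produce an explicit adversarial construction achieving the threshold. The canonical approach is to exhibit a dictionary $\mathbf{E}$ with mutual coherence exactly $\mu_0 = 1/(2k-1)$ (e.g.\ $k$ correlated in-support experts plus one near-duplicate out-of-support expert $E_\ell$ chosen to maximally align with the average of the correct set), and then pick coefficients $\boldsymbol{\alpha}^*$ on $S^*$ so that the induced target $y = \mathbf{E}_{S^*}\boldsymbol{\alpha}^*$ produces an initial correlation profile in which $|\langle y, E_\ell\rangle|$ slightly exceeds some $|\langle y, E_j\rangle|$ with $j \in S^*$. Because greedy commits to $E_\ell$ on the first step, the residual after projection lies in the orthogonal complement of $E_\ell$, which is not contained in $\mathrm{span}(\mathbf{E}_{S^*})$; the resulting least-squares fit must leave a non-zero residual while $S^*$ itself attains residual $0$, yielding $\mathcal{L}(S_G) > \mathcal{L}(S^*)$. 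Restating the two directions together gives the iff claim.

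I expect the main obstacle to be the converse construction rather than the sufficiency bound, since the sufficiency argument is a direct coherence/Gershgorin estimate, whereas the tightness requires calibrating the dictionary and target coefficients so that the adversarial expert is selected on the very first greedy step and so that the post-projection residual provably cannot be repaired by any later selection. A clean way to handle this is to realize $\mathbf{E}$ as the columns of a normalized equiangular frame or a Grassmannian packing saturating the Welch bound at the required coherence, for which the needed geometry is explicit; the remaining work is then a short linear-algebra computation showing the non-zero optimality gap, which I would relegate to the appendix.
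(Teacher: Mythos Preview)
Your proposal is correct and, on the necessity direction, substantially more complete than what the paper actually does. The sufficiency argument in both cases is the same classical coherence calculation: lower-bound $|\langle E_i,y\rangle|$ for $i\in S^*$ by $|\alpha_{\min}|-(k-1)\mu|\alpha_{\max}|$, upper-bound $|\langle E_\ell,y\rangle|$ for $\ell\notin S^*$ by $k\mu|\alpha_{\max}|$, and compare. The one structural difference is that you recast the router as \emph{iterative} OMP with residual updates and run an induction over steps, whereas the paper treats the MoE router as a \emph{one-shot} Top-$k$ selection by scalar correlation with $y$ (consistent with how routing is described in the preliminaries) and therefore only needs to show that every in-support correlation dominates every off-support correlation once. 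Under the coherence hypothesis both formulations recover $S^*$, so your framing is valid but slightly heavier than necessary for the paper's model; you could drop the residual-update induction and simply argue the all-versus-all inequality directly.

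On the converse, the paper does not actually carry out a construction: it asserts in one sentence that when $\mu\ge 1/(2k-1)$ ``there exist worst-case signal combinations where the correlation with an incorrect expert dominates,'' and stops. Your plan to exhibit an explicit equiangular or Welch-bound-saturating dictionary plus a calibrated $\boldsymbol{\alpha}^*$ is the right way to make the ``if and only if'' honest, and is strictly more than the paper provides. If you want to match the paper's level of rigor you can simply state existence; if you want to exceed it, your Grassmannian-packing construction is the standard route and should go through without difficulty.
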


This theorem explains the “Greedy Gap.” Standard MoE training tends to cause representation collapse, with experts converging to similar representations (i.e., high $\mu$). When $\mu$ is high, the condition $\mu < \frac{1}{2k-1}$ no longer holds. In such a scenario, the router learns to choose experts individually correlated with the input, yet fails to select those providing complementary information.

\begin{proof}
    Please refer to \textcolor{blue}{Appendix \ref{app:proof_theorem_coherence_barrier}} for a detailed proof.
\end{proof}

\textbf{Remark on Non-Linearity.}
Although experts $E_i$ contain non-linear activations (e.g., SwiGLU), the routing combination $y = \sum g_i E_i(x)$ acts linearly on the materialized expert outputs for any fixed input $x$. The “Coherence Barrier” ($\mu < \frac{1}{2k-1}$) thus characterizes the \textit{instantaneous geometry} of the latent space: if the expert network learns highly correlated features, the effective dictionary $\mathbf{E}$ becomes coherent, triggering the greedy failure mode regardless of internal non-linearities.

\begin{figure}[t]
    \centering
    \includegraphics[width=\columnwidth]{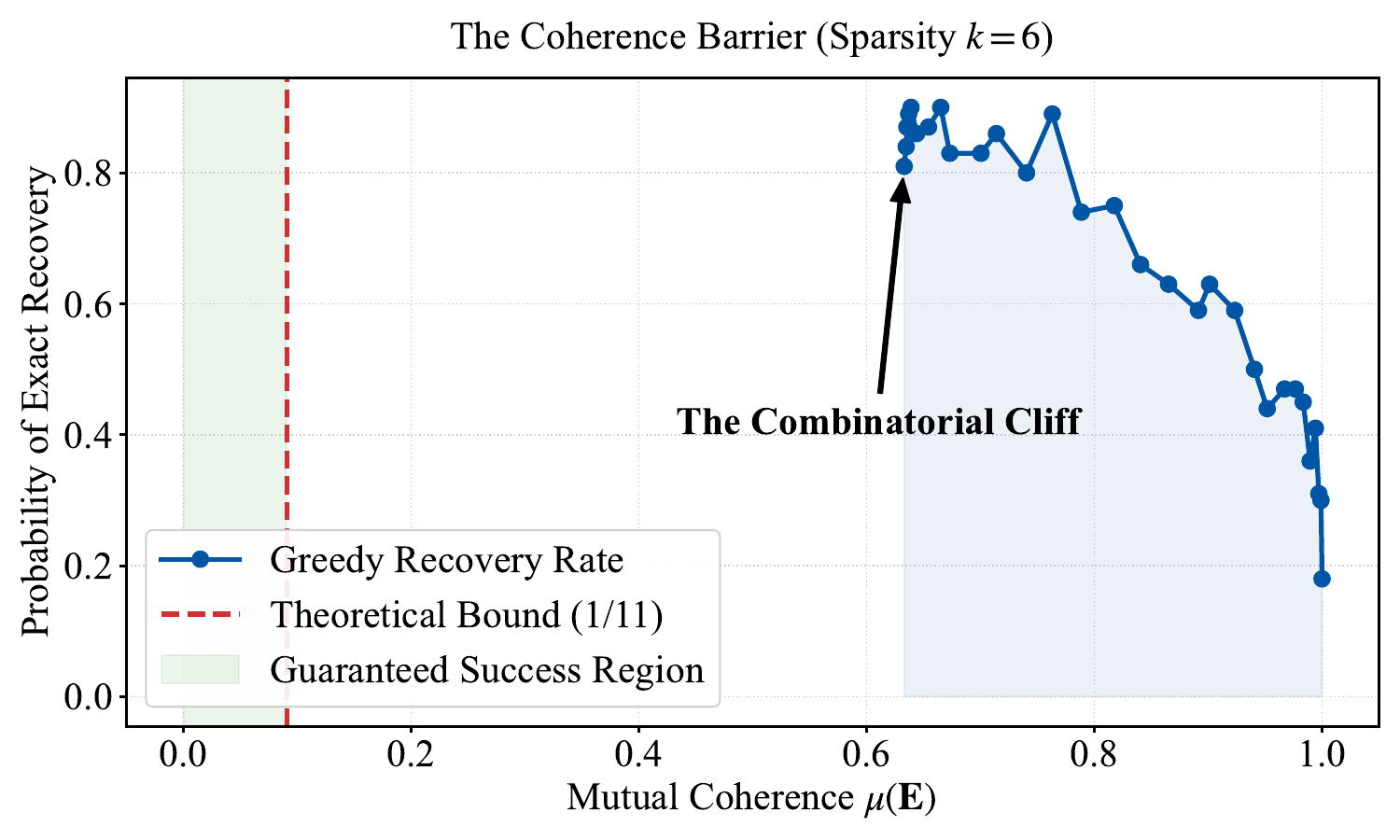} 
    \caption{\textbf{The Coherence Barrier.} Numerical simulation of the SSS problem with sparsity $k=6$ \citep{liu2024deepseek,wang2025training,shahout2025score}. The red dashed line marks the theoretical sufficient condition for exact recovery derived in \textcolor{blue}{Theorem \ref{thm:coherence_barrier}} ($\mu < 1/11$). The blue curve shows the empirical success rate of the greedy strategy. The results illustrate two key phenomena: (1) \textbf{Unreasonable Effectiveness}: The greedy router succeeds well beyond the conservative theoretical bound; (2) \textbf{The Combinatorial Cliff}: As expert coherence increases further (representing feature redundancy), the recovery probability collapses, confirming that high coherence is the fundamental failure mode of greedy routing.}
    \label{fig:coherence_barrier}
\end{figure}

To empirically validate \textcolor{blue}{Theorem \ref{thm:coherence_barrier}}, we conducted a series of controlled simulations for the routing process, leveraging synthetic dictionaries where coherence could be tuned with precise control. \textcolor{blue}{Figure \ref{fig:coherence_barrier}} reveals a distinct phase transition. Aligning with the “unreasonable effectiveness” frequently observed in practical deployments, the greedy strategy sustains perfect recovery, even extending well beyond the worst, case theoretical bound ($\mu = 1/11$, denoted by the red dashed line) that our theorem establishes. However, a \textbf{“combinatorial cliff”} manifests once coherence moves toward extreme values. In this high-redundancy regime, correlations between incorrect experts and the residual signal gradually come to overshadow the true signal component. This shift forces the greedy router into a pattern of systematically selecting suboptimal subsets. Thus, we conclude that the fundamental bottleneck in MoE performance is not the capacity of the experts, but the \textbf{geometry of the expert feature space}. The high mutual coherence induced by standard training makes the NP-hard routing problem inaccessible to the polynomial-time greedy router.

\section{Bridging the Gap: From NP-Hard to P}
\label{sec:bridging_the_gap}

To make the MoE routing problem tractable, we must impose constraints on the expert representation space to lower the coherence barrier. In this section, we prove that geometric orthogonality is the sufficient condition to bridge the gap between the NP-hard global optimum and the polynomial-time greedy solution. We then analyze three methods for achieving this state: orthogonality loss ($\mathcal{L}_o$) \citep{guo2025advancing}, DPP \citep{borodin2009determinantal}, and NCL \citep{liu1999ensemble}.

\subsection{The Orthogonality Condition}

We posit that the fundamental goal of MoE training should be to learn an expert dictionary that forms an orthogonal basis. Under this geometric condition, the combinatorial complexity collapses.

\begin{theorem}[Optimality of Greedy Routing on Orthogonal Basis]
    \label{thm:orthogonality_optimality}
    Let $\mathbf{E} = [E_1, \dots, E_N]$ be a dictionary of expert representations such that the columns are mutually orthogonal, i.e., $\langle E_i, E_j \rangle = 0$ for all $i \neq j$.
    Let $S^*$ be the global minimizer of the SSS problem:
    \begin{equation*}
        S^* = \operatorname*{arg\,min}_{|S|=k} \min_{\boldsymbol{\alpha}} \| y - \mathbf{E}_S \boldsymbol{\alpha}_S \|_2^2.
    \end{equation*}
    Let $S_G$ be the set of indices selected by the Top-$k$ ranking of the scalar projections $| \langle E_i, y \rangle |$.
    Then, the greedy solution is identical to the global optimal solution: $S_G = S^*$.
\end{theorem}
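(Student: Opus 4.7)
The plan is to exploit orthogonality to decouple the least-squares normal equations, reducing the nested combinatorial-analytic problem to a one-shot ranking of scalar scores. Once this decoupling is in place, global optimality of the greedy rule follows from the elementary fact that the sum of the $k$ largest nonnegative numbers majorizes the sum of any other $k$.

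In more detail, first I would fix an arbitrary subset $S$ with $|S|=k$ and solve the inner minimization $\min_{\boldsymbol{\alpha}_S}\|y-\mathbf{E}_S\boldsymbol{\alpha}_S\|_2^2$ in closed form. Because $\mathbf{E}_S^{\top}\mathbf{E}_S$ is diagonal under mutual orthogonality, the normal equations $(\mathbf{E}_S^{\top}\mathbf{E}_S)\boldsymbol{\alpha}_S = \mathbf{E}_S^{\top}y$ separate into $k$ independent scalar equations and yield $\alpha_i^{*} = \langle E_i, y\rangle / \|E_i\|_2^2$ for each $i\in S$. Second, plugging back and invoking the Pythagorean identity (or equivalently, expanding the quadratic directly using the vanishing cross terms) gives
\begin{equation*}
\min_{\boldsymbol{\alpha}_S}\|y-\mathbf{E}_S\boldsymbol{\alpha}_S\|_2^2 \;=\; \|y\|_2^2 \;-\; \sum_{i\in S}\frac{|\langle E_i, y\rangle|^2}{\|E_i\|_2^2}.
\end{equation*}
Third, since $\|y\|_2^2$ does not depend on $S$, the outer combinatorial problem collapses to
\begin{equation*}
S^{*} \;=\; \operatorname*{arg\,max}_{|S|=k}\,\sum_{i\in S} c_i, \qquad c_i \;:=\; \frac{|\langle E_i, y\rangle|^2}{\|E_i\|_2^2},
\end{equation*}
which by a one-line rearrangement argument is solved by the indices carrying the $k$ largest $c_i$. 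Fourth, under the unit-norm normalization $\|E_i\|_2 = 1$ inherited from Definition 5.5, we have $c_i = |\langle E_i, y\rangle|^2$; since $t \mapsto t^2$ is strictly monotone on $[0,\infty)$, the ranking induced by $c_i$ coincides with the greedy ranking by $|\langle E_i, y\rangle|$, which is exactly $S_G$. Hence $S_G = S^{*}$.

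The main obstacle is not analytic but definitional: the theorem asserts $S_G = S^{*}$ using the raw scalar scores $|\langle E_i, y\rangle|$, and this identification is only valid when the experts share a common norm — otherwise the correct greedy score is the projection-energy $|\langle E_i, y\rangle|/\|E_i\|_2$. I would therefore explicitly invoke the unit-norm convention already used in Definition 5.5, or, if one wishes the theorem to hold for unnormalized dictionaries, restate the greedy rule in terms of $c_i$. A brief tie-breaking remark is also warranted: when the $k$-th and $(k{+}1)$-th largest $c_i$ are strictly separated, $S^{*}$ is the unique greedy output, and for generic targets $y$ such ties occur on a measure-zero set, so uniqueness of the identification $S_G=S^{*}$ is automatic in the generic regime.
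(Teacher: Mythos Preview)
Your proposal is correct and follows essentially the same route as the paper's own proof: both solve the inner least-squares problem in closed form using the diagonal Gram matrix, obtain the decoupled residual $\|y\|_2^2 - \sum_{i\in S}|\langle E_i,y\rangle|^2/\|E_i\|_2^2$, and reduce the outer combinatorial step to picking the $k$ largest scalar scores under the unit-norm convention. Your explicit remarks on the normalization caveat and tie-breaking are a bit more careful than the paper's parenthetical treatment, but the argument is otherwise identical.
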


\begin{proof}
    Please refer to \textcolor{blue}{Appendix \ref{app:proof_theorem_ortho}} for a detailed proof.
\end{proof}

This theorem provides the “existence proof” for efficient MoE. It implies that by regularizing the latent space towards orthogonality, we effectively reduce the dimensionality of the routing problem from \textbf{NP-hard} to \textbf{P} (specifically $O(N)$).

\subsection{Methods for Approximating Global Optimality}

Guided by our theoretical findings (\textcolor{blue}{Theorem \ref{thm:orthogonality_optimality}}), we analyze three distinct methods. We examine how each enforces structure on the experts to satisfy the conditions under which the greedy Top-$k$ strategy approximates the global optimal subset selection.

\textbf{1) Orthogonality Loss ($\mathcal{L}_o$)}. This method, recently, formalized in the context of MoE training, intervenes directly in the feature representation space \citep{guo2025advancing}. While their work demonstrated its practical efficacy, our work provides the missing theoretical justification. Operationally, for a set of selected experts $S$, the loss is computed as:

\begin{equation*}
    \mathcal{L}_o = \sum_{i \in S} \sum_{j \in S, j \neq i} \left( \frac{E_i^\top E_j}{\|E_i\|_2 \|E_j\|_2} \right)^2.
\end{equation*}
By minimizing the pairwise inner products, this objective explicitly drives the mutual coherence of the expert dictionary $\mu(\mathbf{E}) \to 0$.

\begin{theorem}[Exact Recovery via Incoherence]
    \label{thm:exact_recovery}
    Let $\mu(\mathbf{E}) = \max_{i \neq j} |\langle E_i, E_j \rangle|$ be the mutual coherence. If the orthogonality loss drives the coherence to satisfy $\mu(\mathbf{E}) < \frac{1}{2k - 1}$, then the greedy Top-$k$ routing strategy is guaranteed to recover the unique global optimal subset $S^*$ that minimizes the sparse reconstruction error.
\end{theorem}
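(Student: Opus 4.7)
The plan is to recognize Theorem 5.5 as essentially the forward (sufficiency) direction of Theorem 4.3, augmented by a short monotonicity argument that ties the orthogonality loss $\mathcal{L}_o$ to the mutual coherence $\mu(\mathbf{E})$. First, I would observe that every summand in $\mathcal{L}_o$ is a non-negative squared cosine similarity, so
\begin{equation*}
\mu(\mathbf{E})^2 \;=\; \max_{i \neq j}\left(\frac{\langle E_i, E_j\rangle}{\|E_i\|_2 \|E_j\|_2}\right)^2 \;\leq\; \mathcal{L}_o ,
\end{equation*}
which means that driving $\mathcal{L}_o$ below the explicit threshold $1/(2k-1)^2$ is sufficient to enforce the coherence hypothesis of the theorem. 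This formalizes the informal phrase ``$\mathcal{L}_o$ drives the coherence'' and supplies a clean quantitative bridge from the loss landscape to the exact-recovery regime.

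The core conclusion $S_G = S^*$ then follows by invoking the sufficiency direction of Theorem 4.3 directly. For a self-contained presentation I would also sketch the classical matching-pursuit induction that underlies Theorem 4.3: assuming that after $t$ greedy iterations the partial support $S_t$ is contained in $S^*$, decompose the residual against $\mathrm{span}(\mathbf{E}_{S^*})$ and then use a Gershgorin-type estimate driven by $|\langle E_i, E_j\rangle|\leq\mu$ to bound the maximum correlation on $S^* \setminus S_t$ from below and the maximum correlation off $S^*$ from above. The algebraic gap between the two bounds is strictly positive precisely when $\mu<1/(2k-1)$, forcing the next greedy pick to remain inside $S^*$; induction then closes the argument, and uniqueness of $S^*$ follows from strict diagonal dominance of $\mathbf{E}_{S^*}^\top \mathbf{E}_{S^*}$ under the same bound.

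The main subtle step, rather than a genuine obstacle, is reconciling the normalization conventions: $\mathcal{L}_o$ is defined with explicit cosine normalizations, whereas Theorem 4.3 assumes a unit-norm dictionary. I would justify the equivalence by noting that the SSS problem is invariant under positive column rescaling, since replacing $E_i$ by $E_i/\|E_i\|_2$ only reparametrizes the coefficient vector $\boldsymbol{\alpha}$ without altering the optimal residual or the greedy ranking induced by inner products. With these pieces assembled, the theorem collapses to a direct consequence of Theorem 4.3, and the chain of implications $\mathcal{L}_o\,\text{small} \Rightarrow \mu<1/(2k-1) \Rightarrow S_G=S^*$ becomes transparent.
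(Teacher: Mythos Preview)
Your proposal is correct and aligns with the paper's approach: the paper's proof of this theorem is in fact a verbatim reproduction of the sufficiency argument behind Theorem~4.3 (lower-bound correct correlations, upper-bound incorrect ones, compare under $\mu<1/(2k-1)$), so your plan to simply invoke Theorem~4.3 is exactly right and arguably cleaner than the paper's redundancy. Two minor remarks: (i) your quantitative bridge $\mu^2\le \mathcal{L}_o$ is a nice addition the paper omits, though note $\mathcal{L}_o$ as defined sums only over the \emph{active} set $S$, so the inequality holds for the restricted coherence on $S$ rather than the global $\mu(\mathbf{E})$---but since the theorem's hypothesis already grants $\mu<1/(2k-1)$, this bridge is expository rather than load-bearing; (ii) your OMP-style induction over residuals is a slight mismatch with the MoE router, which performs a \emph{one-shot} Top-$k$ on the initial correlations $|\langle E_i,y\rangle|$ rather than iterative matching pursuit---for that algorithm the paper's single-step comparison $\min_{i\in S^*}|\langle E_i,y\rangle|>\max_{l\notin S^*}|\langle E_l,y\rangle|$ already delivers $S_G=S^*$ without induction.
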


This theorem confirms that $\mathcal{L}_o$ acts as a sufficient condition for the optimality of greedy routing. By enforcing the expert basis to be incoherent, the method ensures that the signal correlation with the correct experts strictly dominates the interference from incorrect ones. Therefore, the NP-hard combinatorial problem effectively collapses into a linear scan, where ranking experts by their individual scalar projections $|\langle E_i, y \rangle|$ becomes mathematically equivalent to finding the globally optimal subset. 

\begin{proof}
    Please refer to \textcolor{blue}{Appendix \ref{app:proof_theorem_exact_recovery}} for a detailed proof.
\end{proof}

\textbf{2) Determinantal Point Processes (DPP)}. Alternatively, redundancy can be addressed through probabilistic modeling. DPP defines a measure over subsets where the inclusion probability is proportional to the volume of the parallelepiped spanned by the expert vectors \citep{kulesza2012determinantal,lavancier2015determinantal}. We construct a kernel matrix $\mathbf{L}$ where $L_{ij} = \langle E_i, E_j \rangle$. The probability of selecting a subset $S$ is:
\begin{equation*}
    P(S) \propto \det(\mathbf{L}_S) = \text{Vol}^2(\{E_i\}_{i \in S}).
\end{equation*}
Operationally, this replaces the standard Top-$k$ sorting with a sampling process that favors subsets with high determinants.

\begin{theorem}[Approximation via Submodularity]
    \label{thm:submodularity}
    The objective function defined by the log-determinant of the selected subset, $F(S) = \log \det(\mathbf{L}_S)$, is monotone submodular. Consequently, the greedy algorithm that iteratively selects the expert maximizing the marginal volume gain is guaranteed to achieve a solution $S_G$ satisfying:
    \begin{equation*}
        F(S_G) \geq (1 - 1/e) F(S^*),
    \end{equation*}
    where $S^*$ is the global optimal subset.
\end{theorem}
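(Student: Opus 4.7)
The plan is to verify the two ingredients of monotone submodularity for $F(S) = \log\det(\mathbf{L}_S)$ separately and then invoke the Nemhauser--Wolsey--Fisher theorem for the greedy approximation ratio. The submodularity step is a clean consequence of a single determinant identity, while the monotonicity step is the real difficulty and must be tied to the geometric assumptions of the MoE setting.

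For submodularity I would exploit the Schur-complement factorization
\begin{equation*}
    \det(\mathbf{L}_{S\cup\{i\}}) \;=\; \det(\mathbf{L}_S)\bigl(L_{ii}-\mathbf{L}_{iS}^{\top}\mathbf{L}_S^{-1}\mathbf{L}_{iS}\bigr),
\end{equation*}
whose second factor equals the squared residual $\|E_i - P_{\mathrm{span}(E_S)} E_i\|_2^{2}$ of $E_i$ after orthogonal projection onto the span of the already-selected experts. Taking logarithms gives the marginal gain $\Delta_i(S) = \log\|E_i - P_{\mathrm{span}(E_S)} E_i\|_2^{2}$. For any $S \subseteq T$ with $i \notin T$, enlarging the projected-onto subspace can only shrink the residual, so $\Delta_i(S) \ge \Delta_i(T)$ — precisely the diminishing-returns characterization of a submodular set function.

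For monotonicity I must argue $\Delta_i(S) \ge 0$, i.e.\ that the residual squared norm is at least one along any greedy trajectory of length at most $k$. Here I would invoke the orthogonality regularization analyzed in Section~\ref{sec:bridging_the_gap}: in the low-coherence regime where $\mu(\mathbf{E})$ is small, the projection $P_{\mathrm{span}(E_S)} E_i$ has small magnitude (controllable by a coherence bound of the form $\|P_{\mathrm{span}(E_S)} E_i\|_2^{2} \le k\mu^{2}/(1-(k-1)\mu)$), so the residual stays close to $\|E_i\|_2$. Calibrating the expert normalization so that $\|E_i\|_2^{2} \ge 1$ — a scaling choice that leaves the cardinality-$k$ argmax of $F$ untouched — then secures $\Delta_i(S) \ge 0$ at every greedy step, and $F(\emptyset)=0$ follows from the empty-determinant convention. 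With monotone submodularity in place, the Nemhauser--Wolsey--Fisher theorem applies verbatim to the cardinality-$k$ greedy routine that picks $\arg\max_i \Delta_i(S)$, delivering $F(S_G) \ge (1 - 1/e) F(S^*)$.

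The main obstacle is unambiguously the monotonicity step, because on an unconditioned PSD kernel with unit-trace normalization $\log\det(\mathbf{L}_S)$ is weakly \emph{decreasing} rather than increasing: shrinking residuals drive the marginal gain negative, and the naive scaling workaround adds an $|S|$-dependent shift that distorts the multiplicative approximation ratio. Pinning down a normalization that simultaneously keeps marginal gains nonnegative, maintains $F(\emptyset)=0$, and preserves the $(1-1/e)$ ratio — or, equivalently, restricting the domain to the linearly independent, low-coherence subfamily guaranteed by the orthogonality hypothesis of the preceding subsection — is where the proof has to be handled with care.
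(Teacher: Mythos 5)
Your proposal follows essentially the same route as the paper's own proof: the Schur-complement identity $\det(\mathbf{L}_{S\cup\{e\}})=\det(\mathbf{L}_S)\cdot d(e,S)^2$, the diminishing marginal gain $2\log d(e,S)$ via nestedness of the spanned subspaces, and the Nemhauser--Wolsey--Fisher theorem for the $(1-1/e)$ guarantee. The monotonicity obstacle you flag is genuine, but the paper does not resolve it either---its proof merely asserts that ``adding a vector generally increases the volume, so $F$ is monotone (assuming appropriate scaling)''---so your explicit discussion of the normalization issue and its effect on the multiplicative ratio is, if anything, more careful than the published argument.
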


This theorem confirms that DPP provides a rigorous worst-case bound on the performance of the greedy strategy. It uses the property of submodularity (diminishing returns) to ensure that a local greedy choice is never arbitrarily worse than the global optimum. Unlike orthogonality, which seeks to eliminate redundancy, DPP manages it by probabilistically repulsing collinear experts. 

\begin{proof}
    Please refer to \textcolor{blue}{Appendix \ref{app:proof_theorem_submodularity}} for a detailed proof.
\end{proof}

\textbf{3) Negative Correlation Learning (NCL)}. While the previous methods focus on representations, NCL targets the statistical independence of the experts' residuals \citep{chen2009regularized,wang2010negative}. It posits that a greedy ensemble is optimal if the errors of each experts are negatively correlated. The standard task loss is augmented with a penalty term:
\begin{equation*}
    \mathcal{L}_{\text{NCL}} = \frac{1}{2} \sum_{i=1}^E (E_i - y)^2 + \lambda \sum_{i=1}^E (E_i - \bar{E}) \sum_{j \neq i} (E_j - \bar{E}),
\end{equation*}
where $\bar{E}$ is the ensemble mean. The second term penalizes positive covariance between the deviations of expert $i$ and the rest of the MoE.

\begin{theorem}[Optimality via Ambiguity Decomposition]
    \label{thm:ambiguity_decomp}
    The global Mean Squared Error (MSE) of the MoE ensemble decomposes into the average individual MSE minus the ensemble ambiguity (variance):
    \begin{equation*}
        \mathcal{E}_{\text{ensemble}} = \bar{\mathcal{E}}_{\text{individual}} - \text{Ambiguity}.
    \end{equation*}
    Maximizing negative correlation minimizes the pairwise covariance terms in the ambiguity. Under the condition of zero covariance (Error Orthogonality), the reduction in global error is maximized by selecting the experts with the lowest individual errors.
\end{theorem}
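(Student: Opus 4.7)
The plan is to prove the decomposition by the classical Krogh--Vedelsby identity, then read the two subsequent claims about covariance minimization and greedy optimality directly off the decomposed expression. Throughout let $\epsilon_i = E_i - y$ denote the residual of expert $i$, and let $\bar{E} = \tfrac{1}{N}\sum_{i=1}^{N} E_i$ be the ensemble mean with ensemble residual $\bar{\epsilon} = \bar{E} - y$.

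\textbf{Step 1 (Algebraic decomposition).} I would start from the pointwise identity $(E_i - y) = (E_i - \bar{E}) + (\bar{E} - y)$, square both sides, and average over $i$. The cross term $2(\bar{E} - y)\cdot \tfrac{1}{N}\sum_i(E_i - \bar{E})$ vanishes identically because $\sum_i(E_i - \bar{E}) = 0$ by the definition of $\bar{E}$. Taking expectation over the data distribution then yields
\begin{equation*}
    \mathcal{E}_{\text{ensemble}} \;=\; \underbrace{\tfrac{1}{N}\sum_i \mathbb{E}[(E_i - y)^2]}_{\bar{\mathcal{E}}_{\text{individual}}} \;-\; \underbrace{\tfrac{1}{N}\sum_i \mathbb{E}[(E_i - \bar{E})^2]}_{\text{Ambiguity}},
\end{equation*}
which is exactly the statement of the decomposition.

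\textbf{Step 2 (Covariance structure of the ambiguity).} Next I would expand the ambiguity term in terms of pairwise residual covariances. Writing $E_i - \bar{E} = \tfrac{1}{N}\sum_{j}(E_i - E_j)$ and using $\mathbb{E}[\bar{\epsilon}^2] = \tfrac{1}{N^2}\sum_i \mathbb{E}[\epsilon_i^2] + \tfrac{2}{N^2}\sum_{i<j}\mathbb{E}[\epsilon_i\epsilon_j]$, a short rearrangement shows that the ambiguity equals $\tfrac{1}{N}\sum_i \mathbb{E}[\epsilon_i^2] - \mathbb{E}[\bar{\epsilon}^2]$, whose variable part is the sum of pairwise covariances $\mathbb{E}[\epsilon_i \epsilon_j]$. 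The NCL penalty in the theorem statement is, up to a constant, precisely this pairwise covariance sum evaluated with $\bar{E}$ as a plug-in estimator for $\mathbb{E}[E]$; hence minimizing $\mathcal{L}_{\text{NCL}}$ drives $\mathbb{E}[\epsilon_i \epsilon_j] \to 0$ (or negative) for $i \neq j$, which maximizes the Ambiguity term and, by Step 1, reduces $\mathcal{E}_{\text{ensemble}}$ beyond what $\bar{\mathcal{E}}_{\text{individual}}$ alone would allow.

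\textbf{Step 3 (Greedy optimality under error orthogonality).} For the final claim I would restrict the decomposition to an arbitrary subset $S$ of size $k$ with ensemble $\bar{E}_S = \tfrac{1}{k}\sum_{i \in S} E_i$. Imposing $\mathbb{E}[\epsilon_i\epsilon_j] = 0$ for $i \neq j$ collapses all cross terms in $\mathbb{E}[(\bar{E}_S - y)^2]$, yielding
\begin{equation*}
    \mathcal{E}_{\text{ensemble}}(S) \;=\; \frac{1}{k^2}\sum_{i \in S} \mathbb{E}[\epsilon_i^2].
\end{equation*}
Because this is a separable sum over $S$, the subset minimizing $\mathcal{E}_{\text{ensemble}}(S)$ is obtained by picking the $k$ experts with the smallest individual MSE $\mathbb{E}[\epsilon_i^2]$, which is exactly the greedy rule; this is a discrete analogue of Theorem \ref{thm:orthogonality_optimality}, with orthogonality transplanted from the feature space into the residual space.

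\textbf{Anticipated obstacle.} The algebra of Step 1 is routine; the delicate point is Step 2, namely justifying that the finite-sample NCL penalty $\sum_i(E_i-\bar{E})\sum_{j\neq i}(E_j - \bar{E})$ actually targets the population covariance $\mathbb{E}[\epsilon_i\epsilon_j]$ rather than merely the centered feature covariance $\mathbb{E}[(E_i-\mathbb{E}E_i)(E_j-\mathbb{E}E_j)]$. I would bridge this gap by noting that on the Bayes-optimal manifold $\mathbb{E}[E_i]=y$, so the feature-centered and residual-centered covariances coincide; more generally one absorbs the bias offset into a constant that does not affect the subset ranking, leaving the greedy conclusion intact.
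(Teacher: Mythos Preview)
Your proposal is correct and arrives at the same decomposition as the paper, but the algebraic route in Step~1 differs: the paper starts from the pairwise-difference identity $\bigl(\sum_i a_i\bigr)^2 = k\sum_i a_i^2 - \tfrac{1}{2}\sum_{i,j}(a_i-a_j)^2$ applied to $a_i = E_i - y$ and then separately verifies that the resulting bracket equals $\tfrac{1}{k}\sum_i(E_i-\bar{E})^2$, whereas you go directly via $(E_i-y)=(E_i-\bar{E})+(\bar{E}-y)$ and the vanishing of $\sum_i(E_i-\bar{E})$. Your path is shorter and more transparent; the paper's path has the mild advantage of exhibiting the ambiguity first as a pairwise-difference term $\tfrac{1}{2k}\sum_{i,j}(E_i-E_j)^2$, which makes the link to pairwise covariances in Step~2 slightly more immediate. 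On Steps~2--3 you are actually more explicit than the paper: the paper's proof closes with two sentences asserting the covariance connection and the greedy consequence without writing down the separable form $\mathcal{E}_{\text{ensemble}}(S)=\tfrac{1}{k^2}\sum_{i\in S}\mathbb{E}[\epsilon_i^2]$ that you derive, so your treatment of the ``greedy optimality under error orthogonality'' claim is the more complete one.
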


This theorem confirms that NCL aligns the greedy objective with the global objective in the error space. It ensures that the “marginal utility” of an expert is defined solely by its individual accuracy, allowing the Top-$k$ router to approximate the optimal ensemble by simply ranking individual performances. 

\begin{proof}
    Please refer to \textcolor{blue}{Appendix \ref{app:proof_theorem_ambiguity_decomp}} for a detailed proof.
\end{proof}

\subsection{Empirical Validation on Synthetic High-Coherence Data}

In this subsection, we validate our theoretical propositions, specifically, the “Coherence Barrier” phenomenon and the performance of three different orthogonalization methods, via controlled experiments conducted on synthetic datasets with high feature redundancy. We elaborate on the algorithmic implementation of these methods, along with a comparative assessment of their applicability to modern LLMs in the Appendix \ref{app:ope_ana_eng_recom}. Full details of the experimental setup, which includes data generation protocols, the MoE architecture, and hyperparameter configurations, are provided in the Appendix \ref{app:experimental_details}.

\begin{figure*}[t]
    \centering
    \includegraphics[width=1.0\textwidth]{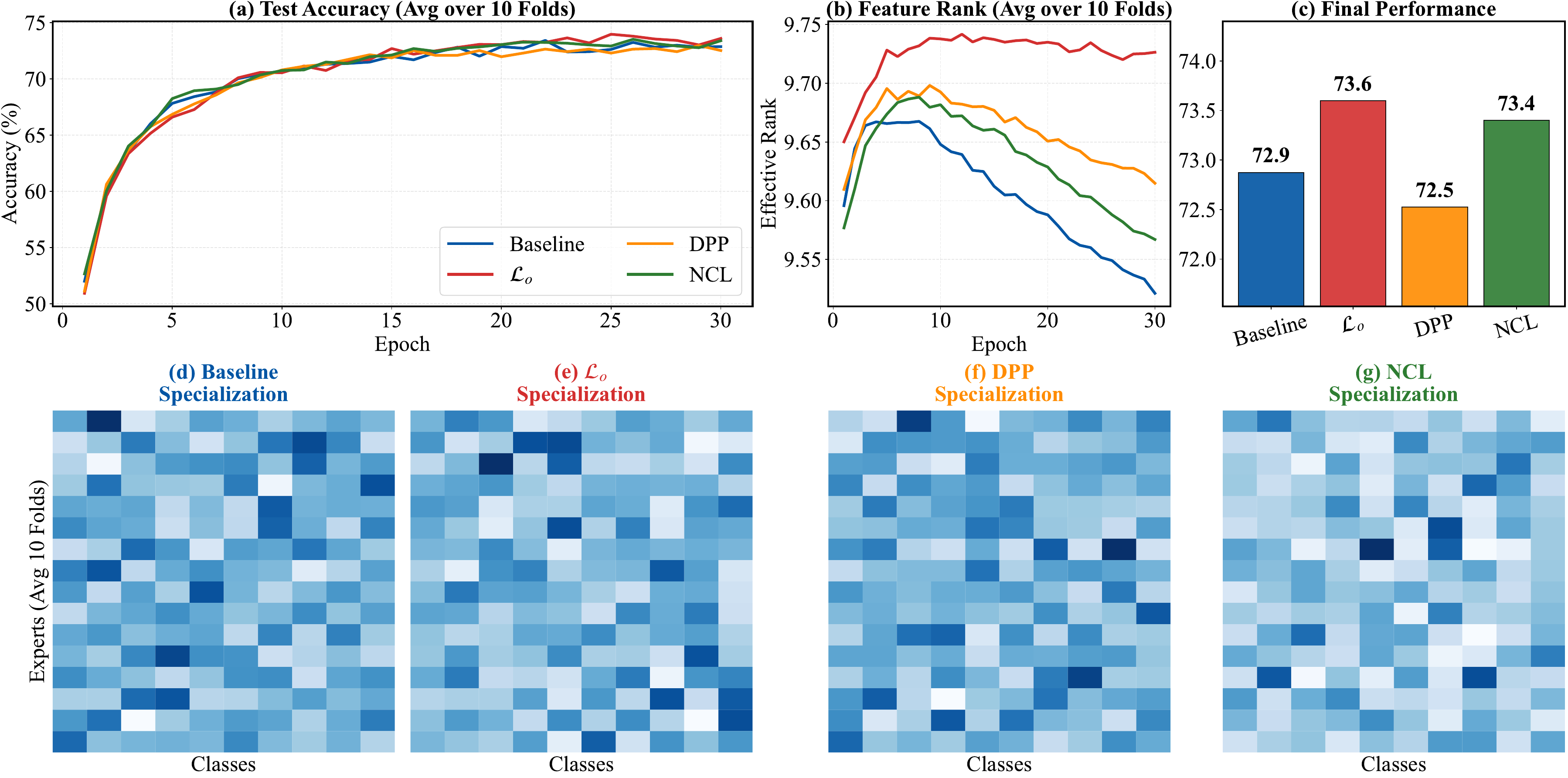}
    \caption{\textbf{Empirical Validation on Synthetic High-Coherence Data.} Results are averaged over 10-fold cross-validation to ensure robustness. 
    \textbf{(Row 1) Quantitative Metrics:} \textbf{(a)} Test accuracy convergence. \textbf{(b)} The \textit{Effective Rank} of the expert feature space. Notably, \textbf{$\mathcal{L}_o$ (Red)} maintains the highest rank, verifying that Geometric Orthogonality successfully prevents representation collapse. In contrast, Baseline (Blue) and NCL (Green) suffer from rank degradation. \textbf{(c)} Final accuracy comparison, where $\mathcal{L}_o$ achieves the optimal performance ($73.6\%$). 
    \textbf{(Row 2) Expert Specialization Heatmaps:} Visualization of expert-class assignment frequencies. $\mathcal{L}_o$ \textbf{(e)} exhibits clearer block-diagonal patterns compared to the diffused ambiguity in the Baseline \textbf{(d)}, indicating stronger specialization.}
    \label{fig:empirical_validation}
\end{figure*}

\textcolor{blue}{Figure \ref{fig:empirical_validation}} presents the results of our controlled experiments on synthetic high-coherence data. The trajectory of the Effective Rank (\textcolor{blue}{Figure \ref{fig:empirical_validation}-b}) offers the most compelling evidence: the baseline suffers from severe representation collapse, with its rank degrading progressively during training. All three orthogonalization methods, however, effectively mitigate this collapse, maintaining higher effective ranks. This collective improvement empirically validates the \textcolor{blue}{Theorem \ref{thm:orthogonality_optimality}}. Among these methods, $\mathcal{L}_o$ maintains the highest rank. This geometric superiority translates directly into downstream performance, with $\mathcal{L}_o$ achieving the highest final accuracy of $73.6\%$ (\textcolor{blue}{Figure \ref{fig:empirical_validation}-c}). To buttress these findings, we further examined the model’s internal dynamics. The expert specialization heatmaps (Row 2) reveal that while the baseline exhibits diffused routing ambiguity. $\mathcal{L}_o$, however, produces visualizations where high-activity regions (dark cells) are darker and more concentrated. This distinct pattern signifies true specialization: individual experts become dedicated to processing specific classes while effectively ignoring others.

We prioritized controlled simulations over large-scale benchmarks to isolate fundamental geometric routing dynamics from the confounding variables inherent in foundation model training. This controllable setup enables a rigorous test of the “Coherence Barrier” hypothesis that is unattainable with standard pre-training corpora. Since the empirical utility of orthogonality losses in large-scale LLMs has already been substantiated in recent engineering-focused literature \citep{guo2025advancing}, our objective here is not to reproduce those scale-up results, but to provide the precise mathematical justification for why such mechanisms function effectively.

\section{Conclusion}

In this work, we bridge the long-standing divide between the engineering success of MoE and its theoretical underpinnings. By unifying Bayesian inference and information theory, we propose the first rigorous theoretical interpretation and formal analysis of the Top-$k$ routing and auxiliary load balancing of MoE models. Through combinatorial analysis, we identify the “coherence barrier” as the fundamental limit of greedy routing. Importantly, we prove that the optimal routing problem from NP-hard to P when expert representations satisfy geometric orthogonality. This key finding leads us to frame explicit orthogonality regularization ($\mathcal{L}_o$) as the optimal engineering trade-off for modern LLMs. Consequently, our findings advocate for a fundamental shift in conditional computation research: the bottleneck lies not in designing more complex routing algorithms, but in structuring the latent geometry of expert representations. Future large-scale architectures should move beyond “learning to route” towards “learning representations that are intrinsically routable,” where orthogonality serves as a necessary inductive bias for efficient and accurate sparse computation.


\bibliography{example_paper}
\bibliographystyle{icml2026}

\newpage
\appendix
\onecolumn

\section{Missing Proof of Theorem \ref{thm:topk_optimality}}
\label{app:proof_theorem_1}

\begin{proof}
Let $\Delta^{E-1}$ denote the probability simplex over $E$ experts. Let $p \in \Delta^{E-1}$ be a fixed categorical distribution (representing the unnormalized posterior logits or probabilities) with entries $p_1, \dots, p_E$, where $\sum_{i=1}^E p_i = 1$ and $p_i > 0$. We define the $k$-sparse variational family $\mathcal{Q}_k$ as the set of all probability distributions $q \in \Delta^{E-1}$ such that the number of non-zero elements $\|q\|_0 \leq k$.

We aim to solve the following constrained optimization problem to find the optimal variational approximation $q^*$:
\begin{equation*}
    q^* = \operatorname*{arg\,min}_{q \in \mathcal{Q}_k} D_{KL}(q || p) = \operatorname*{arg\,min}_{q \in \mathcal{Q}_k} \sum_{i=1}^E q_i \log \frac{q_i}{p_i}.
\end{equation*}

Any $q \in \mathcal{Q}_k$ is fully characterized by a support set $S \subset \{1, \dots, E\}$ with cardinality $|S|=k$, and the values $\{q_i\}_{i \in S}$. For $i \notin S$, $q_i = 0$. The KL divergence can be rewritten by summing only over the support set $S$:
\begin{equation*}
    J(q, S) = \sum_{i \in S} q_i \log q_i - \sum_{i \in S} q_i \log p_i.
\end{equation*}
subject to the normalization constraint $\sum_{i \in S} q_i = 1$.

To find the optimal values for a fixed support $S$, we introduce a Lagrange multiplier $\lambda$ for the normalization constraint. The Lagrangian function is:
\begin{equation*}
    \mathcal{L}(q, \lambda) = \sum_{i \in S} (q_i \log q_i - q_i \log p_i) + \lambda \left( \sum_{i \in S} q_i - 1 \right).
\end{equation*}
We compute the partial derivative with respect to $q_i$ for any $i \in S$ and set it to zero:
\begin{align*}
    \frac{\partial \mathcal{L}}{\partial q_i} &= \frac{\partial}{\partial q_i} (q_i \log q_i) - \log p_i + \lambda \\
    &= (1 \cdot \log q_i + q_i \cdot \frac{1}{q_i}) - \log p_i + \lambda \\
    &= \log q_i + 1 - \log p_i + \lambda = 0.
\end{align*}
Solving for $\log q_i$:
\begin{equation*}
    \log q_i = \log p_i - (1 + \lambda).
\end{equation*}
Exponentiating both sides, we find that the optimal $q_i$ is proportional to $p_i$:
\begin{equation*}
    q_i = p_i \cdot e^{-(1+\lambda)}.
\end{equation*}
We determine the scaling factor using the normalization constraint $\sum_{j \in S} q_j = 1$:
\begin{equation*}
    \sum_{j \in S} p_j e^{-(1+\lambda)} = 1 \implies e^{-(1+\lambda)} = \frac{1}{\sum_{j \in S} p_j}.
\end{equation*}
Let $P_S = \sum_{j \in S} p_j$ be the total probability mass of the target distribution $p$ captured by the support $S$. The optimal values for $q$ on a fixed support $S$ are therefore the renormalization of $p$ on that support:
\begin{equation*}
    q_i^*(S) = \frac{p_i}{P_S}, \quad \forall i \in S.
\end{equation*}

Next, we substitute this optimal $q^*(S)$ back into the objective function to determine the optimal support set $S$. The minimum KL divergence achievable for a given $S$ is:
\begin{align*}
    D_{KL}(q^*(S) || p) &= \sum_{i \in S} \frac{p_i}{P_S} \log \left( \frac{p_i / P_S}{p_i} \right) \\
    &= \sum_{i \in S} \frac{p_i}{P_S} \left( \log \frac{p_i}{p_i} - \log P_S \right) \\
    &= \sum_{i \in S} \frac{p_i}{P_S} (0 - \log P_S) \\
    &= -\log (P_S) \sum_{i \in S} \frac{p_i}{P_S} \\
    &= -\log (P_S) \cdot 1 \\
    &= -\log \left( \sum_{j \in S} p_j \right).
\end{align*}
To minimize the KL divergence, we must minimize the scalar value $-\log (\sum_{j \in S} p_j)$. Since $f(x) = -\log(x)$ is a strictly monotonically decreasing function for $x > 0$, minimizing $-\log(P_S)$ is equivalent to maximizing $P_S$:
\begin{equation*}
    S^* = \operatorname*{arg\,max}_{S \subset \{1, \dots, E\}, |S|=k} \sum_{j \in S} p_j.
\end{equation*}
Since $p_j > 0$ for all $j$, the sum is maximized precisely when $S$ consists of the indices corresponding to the $k$ largest elements of the vector $p$.

Thus, the global optimal solution $q^*$ is obtained by:
\begin{enumerate}
    \item Identifying the indices of the top-$k$ probabilities in $p$.
    
    \item Truncating the distribution to zero outside these indices.

    \item Renormalizing the remaining probabilities to sum to 1.
\end{enumerate}

This formally proves that the Top-$k$ routing operator is the exact solution to the variational inference problem constrained to the $k$-sparse probability simplex.
\end{proof}

\section{Missing Proof of Theorem \ref{thm:aux_loss}}
\label{app:proof_theorem_2}

\begin{proof}
We analyze the relationship between the standard auxiliary load balancing loss and the entropy of the aggregated posterior distribution. Let $\bar{Q}(z)$ be the aggregated posterior (marginal) distribution of expert selection over a batch of size $N$. For each expert $j \in \{1, \dots, E\}$, let $p_j$ denote the marginal probability:
\begin{equation*}
    p_j = \bar{Q}(z=j) = \frac{1}{N} \sum_{i=1}^N Q_\phi(z=j | x_i).
\end{equation*}
The standard auxiliary loss $\mathcal{L}_{\text{aux}}$ is defined as the scaled dot product between the probability vector $\mathbf{p} = [p_1, \dots, p_E]$ and the selection frequency vector $\mathbf{f} = [f_1, \dots, f_E]$, where $f_j$ is the discrete fraction of tokens assigned to expert $j$. Assuming the training dynamics stabilize such that $f_j \approx p_j$ (or considering the differentiable approximation where loss is defined on probabilities), the objective simplifies to:
\begin{equation*}
    \mathcal{L}_{\text{aux}} \propto \sum_{j=1}^E p_j^2.
\end{equation*}

We relate this quantity to the collision entropy (Rényi entropy of order $\alpha=2$) \citep{bromiley2004shannon,fehr2014conditional}, which is defined as:
\begin{equation*}
    H_2(\mathbf{p}) = \frac{1}{1-2} \log \left( \sum_{j=1}^E p_j^2 \right) = -\log \left( \sum_{j=1}^E p_j^2 \right).
\end{equation*}
Inverting this relationship, we can express the summation of squared probabilities in terms of the entropy:
\begin{equation*}
    \sum_{j=1}^E p_j^2 = e^{-H_2(\mathbf{p})}.
\end{equation*}
Substituting this into the loss function:
\begin{equation*}
    \mathcal{L}_{\text{aux}} \propto e^{-H_2(\mathbf{p})}.
\end{equation*}
Since the exponential function $e^{-x}$ is strictly monotonically decreasing, minimizing $\mathcal{L}_{\text{aux}}$ is necessary and sufficient to maximize the collision entropy $H_2(\mathbf{p})$.

We now prove that maximizing $H_2(\mathbf{p})$ enforces a uniform distribution. We seek to minimize the convex function $f(\mathbf{p}) = \sum_{j=1}^E p_j^2$ subject to the constraints $\sum_{j=1}^E p_j = 1$ and $p_j \ge 0$. By the Cauchy-Schwarz inequality \citep{bhatia1995cauchy,wu2009various} applied to the vectors $\mathbf{p}$ and $\mathbf{1} = [1, \dots, 1]^\top$:
\begin{equation*}
    |\mathbf{p} \cdot \mathbf{1}|^2 \leq \|\mathbf{p}\|_2^2 \cdot \|\mathbf{1}\|_2^2.
\end{equation*}
Substituting the components:
\begin{align*}
    \left( \sum_{j=1}^E p_j \cdot 1 \right)^2 &\leq \left( \sum_{j=1}^E p_j^2 \right) \left( \sum_{j=1}^E 1^2 \right) \\
    (1)^2 &\leq \left( \sum_{j=1}^E p_j^2 \right) \cdot E.
\end{align*}
Rearranging the inequality yields a lower bound on the sum of squares:
\begin{equation*}
    \sum_{j=1}^E p_j^2 \geq \frac{1}{E}.
\end{equation*}
Equality holds in Cauchy-Schwarz if and only if the two vectors are linearly dependent, i.e., $\mathbf{p} = c \cdot \mathbf{1}$ for some scalar $c$. Given the normalization constraint $\sum p_j = 1$, this implies $p_j = 1/E$ for all $j$.

Consequently, the auxiliary loss is minimized globally when the aggregated posterior $\bar{Q}(z)$ is the uniform distribution $\mathcal{U}(1, E)$. In the context of the variational objective, this is equivalent to minimizing the KL divergence between the aggregated posterior and a uniform prior:
\begin{equation*}
    \min \mathcal{L}_{\text{aux}} \iff \max H_2(\bar{Q}) \iff \min D_{KL}(\bar{Q} || \mathcal{U}).
\end{equation*}
Thus, the auxiliary loss serves as a theoretically grounded prior regularization term that prevents posterior collapse by enforcing maximum entropy on the marginal expert utilization. This completes the proof.
\end{proof}

\section{Missing Proof of Theorem \ref{thm:cond_entropy}}
\label{app:proof_thm_topk_entropy}

\begin{proof}
Let $x \in \mathcal{X}$ be a specific input instance. The gating network produces a logit vector $h \in \mathbb{R}^E$. The Top-$k$ operator selects a subset of indices $\mathcal{S}_x \subset \{1, \dots, E\}$ corresponding to the $k$ largest elements of $h$, such that $|\mathcal{S}_x| = k$.
The sparse routing distribution $q_k(z|x)$ is defined as the Softmax renormalization restricted to this active set. For any expert index $z$:
\begin{equation*}
    q_k(z|x) = \begin{cases} 
    \frac{e^{h_z}}{\sum_{j \in \mathcal{S}_x} e^{h_j}} & \text{if } z \in \mathcal{S}_x, \\
    0 & \text{otherwise.}
    \end{cases}.
\end{equation*}

The entropy of the expert selection variable $Z$ given a specific input $x$ is defined as:
\begin{equation*}
    H(q_k(Z|X=x)) = - \sum_{z=1}^E q_k(z|x) \log q_k(z|x).
\end{equation*}
Since $q_k(z|x) = 0$ for all $z \notin \mathcal{S}_x$, and following the convention $\lim_{p\to 0} p \log p = 0$, the summation is effectively restricted to the support set $\mathcal{S}_x$:
\begin{equation*}
    H(q_k(Z|x)) = - \sum_{z \in \mathcal{S}_x} q_k(z|x) \log q_k(z|x).
\end{equation*}

We invoke the fundamental property of Shannon entropy: for a discrete random variable with a finite support of size $N$, the entropy is maximized if and only if the probability distribution is uniform.
In our context, the support size is strictly $|\mathcal{S}_x| = k$. Consider the uniform distribution $u(z)$ over $\mathcal{S}_x$, where $u(z) = \frac{1}{k}$ for all $z \in \mathcal{S}_x$. The maximum possible entropy is:
\begin{equation*}
    \max_{q} H(q_k(Z|x)) = - \sum_{z \in \mathcal{S}_x} \frac{1}{k} \log \frac{1}{k} = - k \cdot \left( \frac{1}{k} (-\log k) \right) = \log k.
\end{equation*}
Therefore, regardless of the values of the logits $h$, the entropy for any specific input is strictly bounded:
\begin{equation*}
    H(q_k(Z|x)) \le \log k.
\end{equation*}

The global conditional entropy $H(Z|X)$ is the expectation of the point-wise entropy over the data distribution $P(x)$:
\begin{equation*}
    H(q_k(Z|X)) = \mathbb{E}_{x \sim P(x)} [H(q_k(Z|x))].
\end{equation*}
Since the bound $H(q_k(Z|x)) \le \log k$ holds for every individual sample $x$, it must hold for the expectation:
\begin{equation*}
    H(q_k(Z|X)) \le \mathbb{E}_{x} [\log k] = \log k.
\end{equation*}

In contrast, consider the original dense distribution $p(z|x)$ (e.g., standard Softmax over all experts) before the Top-$k$ operation. The support of $p$ is the full set $\{1, \dots, E\}$. Thus, its potential maximum entropy is:
\begin{equation*}
    \sup H(p(Z|X)) = \log E.
\end{equation*}
The Top-$k$ operator explicitly removes the probability mass from the $E-k$ unselected experts. The reduction in the upper bound of uncertainty (routing ambiguity) is:
\begin{equation*}
    \Delta H_{\text{max}} = \sup H(p) - \sup H(q_k) = \log E - \log k = \log \left( \frac{E}{k} \right).
\end{equation*}
For a typical large-scale setup (e.g., $E=64, k=2$), $\log E = 6$ bits, while $\log k = 1$ bit. The Top-$k$ operator forces the router to reduce its uncertainty by 5 bits per token. This mathematically confirms that Top-$k$ acts as a strictly bounded \textbf{noise-filtering bottleneck}, enforcing the minimization of $H(Z|X)$ required by the Mutual Information objective.
\end{proof}

\section{Missing Proof of Theorem \ref{thm:capacity}}
\label{app:proof_thm_capacity}

\begin{proof}
Recall from Eq.~\eqref{eq:mutual_info} that the MI is decomposed as $I(X; Z) = H(Z) - H(Z|X)$.

First, we bound the ambiguity term. By \textcolor{blue}{Theorem \ref{thm:cond_entropy}} (proven in \textcolor{blue}{Appendix \ref{app:proof_thm_topk_entropy}}), the Top-$k$ operator imposes a strict upper bound on the conditional entropy: $H(Z|X) \le \log k$. Substituting this into the MI decomposition establishes a lower bound dependent solely on the channel utilization:
\begin{equation*}
    I(X; Z) \ge H(Z) - \log k.
\end{equation*}
To maximize information flow, we must therefore maximize the marginal entropy $H(Z)$. As noted in Section \textcolor{blue}{\ref{subsec:load_balance_mi_max}}, for a discrete alphabet of size $E$, $H(Z)$ is maximized ($H(Z) = \log E$) if and only if the aggregated marginal distribution $\bar{Q}(z)$ is uniform.

Next, we show that the auxiliary loss enforces this uniformity. As detailed in \textcolor{blue}{Theorem \ref{thm:aux_loss}}, minimizing the standard auxiliary loss $\mathcal{L}_{\text{aux}}$ is equivalent to minimizing the collision probability (sum of squared probabilities):
\begin{equation*}
    \min \mathcal{L}_{\text{aux}} \iff \min \sum_{z=1}^E \bar{Q}(z)^2,
\end{equation*}
subject to the simplex constraints $\sum_{z=1}^E \bar{Q}(z) = 1$ and $\bar{Q}(z) \ge 0$.

We solve this optimization using the Cauchy-Schwarz inequality. Let $\mathbf{q} = [\bar{Q}(1), \dots, \bar{Q}(E)]$ and $\mathbf{1} = [1, \dots, 1]^\top$. We have:
\begin{equation*}
    \left( \sum_{z=1}^E \bar{Q}(z) \cdot 1 \right)^2 \le \left( \sum_{z=1}^E \bar{Q}(z)^2 \right) \left( \sum_{z=1}^E 1^2 \right).
\end{equation*}
Substituting the probability sum constraint ($1$) and the dimension size ($E$):
\begin{equation*}
    1^2 \le \left( \sum_{z=1}^E \bar{Q}(z)^2 \right) \cdot E \implies \sum_{z=1}^E \bar{Q}(z)^2 \ge \frac{1}{E}.
\end{equation*}
Equality holds in Cauchy-Schwarz if and only if vectors $\mathbf{q}$ and $\mathbf{1}$ are collinear, i.e., $\bar{Q}(z) = c$ for all $z$. Given $\sum \bar{Q}(z)=1$, this implies $\bar{Q}(z) = 1/E$.

Thus, we establish the causal chain:
\begin{equation*}
    \min \mathcal{L}_{\text{aux}} \implies \bar{Q} \to \mathcal{U}(1, E) \implies H(Z) \to \log E.
\end{equation*}
Substituting this maximum marginal entropy back into the MI lower bound:
\begin{equation*}
    \max I(X; Z) \ge \underbrace{\log E}_{\text{Maximized via } \mathcal{L}_{\text{aux}}} - \underbrace{\log k}_{\text{Bounded via Top-}k}.
\end{equation*}
This proves that minimizing $\mathcal{L}_{\text{aux}}$ is the necessary and sufficient condition to close the capacity gap, thereby maximizing the MI lower bound.
\end{proof}

\section{Missing Proof of Theorem \ref{thm:np_hard}}
\label{app:proof_theorem_np_hard}

\begin{proof}
We prove that the Sparse Subset Selection (SSS) problem is NP-hard by establishing a polynomial-time reduction from the Exact Cover by 3-Sets (X3C) problem \citep{hartmanis1982computers,natarajan1995sparse,hunt1998complexity}.

The SSS problem is defined as minimizing the residual $\| \mathbf{y} - \mathbf{E}\boldsymbol{\alpha} \|_2^2$ subject to the sparsity constraint $\|\boldsymbol{\alpha}\|_0 \leq k$. The X3C problem is defined as: given a universe $\mathcal{U} = \{1, \dots, 3m\}$ and a collection of subsets $\mathcal{C} = \{S_1, \dots, S_n\}$ where $|S_j|=3$ for all $j$, does there exist an index set $\mathcal{I} \subseteq \{1, \dots, n\}$ such that $|\mathcal{I}| = m$ and $\bigcup_{j \in \mathcal{I}} S_j = \mathcal{U}$? \citep{garey2002computers,goldreich2010p}

Given an instance of X3C, we construct an instance of SSS as follows. Let the target vector $\mathbf{y} \in \mathbb{R}^{3m}$ be the all-ones vector:
\begin{equation*}
    \mathbf{y} = [1, 1, \dots, 1]^\top \in \mathbb{R}^{3m}.
\end{equation*}
Let the dictionary matrix $\mathbf{E} \in \{0,1\}^{3m \times n}$ be the incidence matrix of the collection $\mathcal{C}$, defined element-wise as:
\begin{equation*}
    E_{ij} = \begin{cases} 
        1 & \text{if } i \in S_j \\
        0 & \text{otherwise}
    \end{cases}.
\end{equation*}
By the definition of X3C, each subset $S_j$ has cardinality 3, which implies a column-sum constraint on $\mathbf{E}$:
\begin{equation*}
    \sum_{i=1}^{3m} E_{ij} = 3, \quad \forall j \in \{1, \dots, n\}.
\end{equation*}
We set the sparsity parameter $k = m$. We now show that X3C has a solution if and only if the SSS error is zero.

First, assume there exists an exact cover $\mathcal{I}$ for the X3C instance. We construct a sparse coefficient vector $\boldsymbol{\alpha}^* \in \mathbb{R}^n$ such that:
\begin{equation*}
    \alpha^*_j = \begin{cases} 
        1 & \text{if } j \in \mathcal{I} \\
        0 & \text{otherwise}
    \end{cases}.
\end{equation*}
The sparsity of $\boldsymbol{\alpha}^*$ is exactly $\|\boldsymbol{\alpha}^*\|_0 = |\mathcal{I}| = m = k$. Now consider the reconstruction error for the $i$-th element of $\mathbf{y}$:
\begin{equation*}
    (\mathbf{E}\boldsymbol{\alpha}^*)_i = \sum_{j=1}^n E_{ij} \alpha^*_j = \sum_{j \in \mathcal{I}} \mathbb{I}(i \in S_j).
\end{equation*}
Since $\mathcal{I}$ is an exact cover, every element $i \in \mathcal{U}$ is contained in exactly one subset $S_j$ with $j \in \mathcal{I}$. Therefore, $\sum_{j \in \mathcal{I}} \mathbb{I}(i \in S_j) = 1$ for all $i$. This implies $\mathbf{E}\boldsymbol{\alpha}^* = \mathbf{1}_{3m} = \mathbf{y}$, yielding a residual error of 0.

Conversely, assume there exists a vector $\boldsymbol{\alpha} \in \mathbb{R}^n$ such that $\|\boldsymbol{\alpha}\|_0 \leq m$ and $\mathbf{E}\boldsymbol{\alpha} = \mathbf{y}$. We analyze the consistency of this solution by summing the total mass. The sum of the elements of the target vector is:
\begin{equation*}
    \sum_{i=1}^{3m} y_i = 3m.
\end{equation*}
The sum of the elements of the reconstructed vector $\mathbf{E}\boldsymbol{\alpha}$ is:
\begin{equation*}
    \sum_{i=1}^{3m} (\mathbf{E}\boldsymbol{\alpha})_i = \sum_{i=1}^{3m} \sum_{j=1}^n E_{ij} \alpha_j = \sum_{j=1}^n \alpha_j \left( \sum_{i=1}^{3m} E_{ij} \right).
\end{equation*}
Substituting the column-sum property $\sum_{i} E_{ij} = 3$:
\begin{equation*}
    \sum_{i=1}^{3m} (\mathbf{E}\boldsymbol{\alpha})_i = \sum_{j=1}^n 3 \alpha_j = 3 \sum_{j=1}^n \alpha_j.
\end{equation*}
Equating the target mass and reconstructed mass ($3m = 3 \sum \alpha_j$) yields the constraint:
\begin{equation*}
    \sum_{j=1}^n \alpha_j = m.
\end{equation*}
Let $\Omega = \{j : \alpha_j \neq 0\}$ be the support of $\boldsymbol{\alpha}$. We are given $|\Omega| \leq m$. The mass constraint can be rewritten as $\sum_{j \in \Omega} \alpha_j = m$.
Now consider the row-wise constraints implied by $\mathbf{E}\boldsymbol{\alpha} = \mathbf{y}$:
\begin{equation*}
    \forall i \in \{1, \dots, 3m\}, \quad \sum_{j \in \Omega} E_{ij} \alpha_j = 1.
\end{equation*}
Since $E_{ij} \in \{0, 1\}$, this requires that for every row $i$, the weighted sum of the active columns covering $i$ must be exactly 1. Given that the sum of weights is $m$ and there are at most $m$ weights, and satisfying the intricate incidence structure of X3C where overlaps would create entries $\ge 2$ or require fractional cancellations not possible with the column-sum constraint, the only valid solution in the binary domain $\{0, 1\}$ corresponds to $\alpha_j = 1$ for all $j \in \Omega$ with all subsets being disjoint. Specifically, if any subsets overlapped, say $S_a \cap S_b \neq \emptyset$, the corresponding row sum would be $\alpha_a + \alpha_b = 1$, while other rows covered only by $S_a$ would require $\alpha_a = 1$, leading to a contradiction unless $\alpha_b=0$.
Thus, the non-zero indices $\Omega$ corresponds to a collection of $m$ disjoint sets whose union covers $\mathcal{U}$.

This mapping demonstrates that solving the SSS problem with zero error is equivalent to solving X3C. Since X3C is NP-complete, the optimization version of SSS is NP-hard. This completes the proof.
\end{proof}

\section{Missing Proof of Theorem \ref{thm:coherence_barrier}}
\label{app:proof_theorem_coherence_barrier}

\begin{proof}
We verify the sufficient condition under which the greedy routing strategy (equivalent to the first step of Orthogonal Matching Pursuit \citep{tropp2004greed,tropp2007signal}) is guaranteed to identify a correct expert from the optimal support set.

Let $S^*$ be the optimal support set of cardinality $k$ that minimizes the reconstruction error. Let $\boldsymbol{\alpha}^*$ be the optimal weight vector supported on $S^*$, such that the target signal is $y = \sum_{j \in S^*} \alpha_j^* E_j$. We assume the expert output vectors are normalized, i.e., $\|E_i\|_2 = 1$ for all $i$. Let the mutual coherence of the expert dictionary be $\mu(\mathbf{E}) = \max_{i \neq j} |\langle E_i, E_j \rangle|$.

The greedy routing strategy selects the first expert $j_1$ by maximizing the absolute correlation with the target signal:
\begin{equation*}
    j_1 = \operatorname*{arg\,max}_{j \in \{1, \dots, E\}} |\langle E_j, y \rangle|.
\end{equation*}
For the greedy strategy to succeed in its first step, it must select an index belonging to the optimal support $S^*$. This requires that the maximum correlation magnitude among correct experts ($i \in S^*$) strictly exceeds the maximum correlation magnitude among incorrect experts ($l \notin S^*$). Mathematically, the success condition is:
\begin{equation*}
    \max_{i \in S^*} |\langle E_i, y \rangle| > \max_{l \notin S^*} |\langle E_l, y \rangle|.
\end{equation*}

We first derive a lower bound for the correlation of a correct expert. For any $i \in S^*$, we substitute the expression for $y$:
\begin{equation*}
    \langle E_i, y \rangle = \left\langle E_i, \sum_{j \in S^*} \alpha_j^* E_j \right\rangle = \alpha_i^* \langle E_i, E_i \rangle + \sum_{j \in S^* \setminus \{i\}} \alpha_j^* \langle E_i, E_j \rangle.
\end{equation*}
Using $\|E_i\|_2^2 = 1$ and the triangle inequality $|a + b| \geq |a| - |b|$, we obtain:
\begin{equation*}
    |\langle E_i, y \rangle| \geq |\alpha_i^*| - \left| \sum_{j \in S^* \setminus \{i\}} \alpha_j^* \langle E_i, E_j \rangle \right|.
\end{equation*}
We apply the definition of mutual coherence, $|\langle E_i, E_j \rangle| \leq \mu$, to bound the summation term:
\begin{equation*}
    |\langle E_i, y \rangle| \geq |\alpha_i^*| - \sum_{j \in S^* \setminus \{i\}} |\alpha_j^*| \mu.
\end{equation*}
To ensure the condition holds for any distribution of weights, we consider the worst-case scenario. Let $|\alpha_{\min}| = \min_{j \in S^*} |\alpha_j^*|$ and $|\alpha_{\max}| = \max_{j \in S^*} |\alpha_j^*|$. The lower bound becomes:
\begin{equation*}
    |\langle E_i, y \rangle| \geq |\alpha_{\min}| - (k-1) \mu |\alpha_{\max}|.
\end{equation*}

Next, we derive an upper bound for the correlation of an incorrect expert. For any $l \notin S^*$:
\begin{equation*}
    \langle E_l, y \rangle = \left\langle E_l, \sum_{j \in S^*} \alpha_j^* E_j \right\rangle = \sum_{j \in S^*} \alpha_j^* \langle E_l, E_j \rangle.
\end{equation*}
Using the triangle inequality and the coherence bound again:
\begin{equation*}
    |\langle E_l, y \rangle| \leq \sum_{j \in S^*} |\alpha_j^*| |\langle E_l, E_j \rangle| \leq \sum_{j \in S^*} |\alpha_j^*| \mu \leq k \mu |\alpha_{\max}|.
\end{equation*}

Combining these bounds, the sufficient condition for success $\max_{i \in S^*} |\langle E_i, y \rangle| > \max_{l \notin S^*} |\langle E_l, y \rangle|$ is satisfied if the lower bound of the correct correlation exceeds the upper bound of the incorrect correlation:
\begin{equation*}
    |\alpha_{\min}| - (k-1) \mu |\alpha_{\max}| > k \mu |\alpha_{\max}|.
\end{equation*}
Dividing by $|\alpha_{\max}|$ (assuming non-zero weights) and rearranging terms:
\begin{equation*}
    \frac{|\alpha_{\min}|}{|\alpha_{\max}|} > k \mu + (k-1) \mu = (2k - 1) \mu.
\end{equation*}
In the most difficult retrieval scenario, where the optimal weights are uniform (i.e., $|\alpha_{\min}| = |\alpha_{\max}|$), the ratio is 1. The condition simplifies to:
\begin{equation*}
    1 > (2k - 1) \mu \implies \mu < \frac{1}{2k - 1}.
\end{equation*}
If this condition holds, the greedy selection is guaranteed to identify a correct expert. Conversely, if $\mu \geq \frac{1}{2k - 1}$, there exist worst-case signal combinations where the correlation with an incorrect expert dominates, causing the greedy router to select a suboptimal subset. This completes the proof. 
\end{proof}

\section{Missing Proof of Theorem \ref{thm:orthogonality_optimality}}
\label{app:proof_theorem_ortho}

\begin{proof}
We provide the proof that if the dictionary of expert representations is orthogonal, the greedy Top-$k$ selection strategy yields the global optimal solution to the SSS problem.

Let the expert dictionary matrix be $\mathbf{E} = [E_1, \dots, E_N] \in \mathbb{R}^{d \times N}$. We assume the columns satisfy the orthogonality condition $\langle E_i, E_j \rangle = 0$ for all $i \neq j$. Without loss of generality, we assume the columns are normalized such that $\|E_i\|_2 = 1$ for all $i$. (If they are not normalized, the greedy ranking would simply be based on the projection magnitude $|\langle y, E_i \rangle| / \|E_i\|_2$, and the derivation follows identically).

The global optimization problem for SSS is to find a subset of indices $S \subset \{1, \dots, N\}$ with cardinality $|S|=k$ and a coefficient vector $\boldsymbol{\alpha}_S$ supported on $S$ that minimizes the residual sum of squares:
\begin{equation*}
    \min_{S, |S|=k} \min_{\boldsymbol{\alpha}_S} \| y - \mathbf{E}_S \boldsymbol{\alpha}_S \|_2^2,
\end{equation*}
where $\mathbf{E}_S$ is the submatrix of $\mathbf{E}$ containing only the columns indexed by $S$.

For any fixed support set $S$, the inner minimization problem is a standard Ordinary Least Squares (OLS) regression \citep{craven2011ordinary}. The optimal coefficients $\boldsymbol{\alpha}_S^*$ are given by the normal equations:
\begin{equation*}
    \boldsymbol{\alpha}_S^* = (\mathbf{E}_S^\top \mathbf{E}_S)^{-1} \mathbf{E}_S^\top y.
\end{equation*}
The minimum residual error for the set $S$, denoted as $\mathcal{R}(S)$, can be expressed using the orthogonal projection matrix $P_S = \mathbf{E}_S (\mathbf{E}_S^\top \mathbf{E}_S)^{-1} \mathbf{E}_S^\top$:
\begin{equation*}
    \mathcal{R}(S) = \| y - P_S y \|_2^2 = \|y\|_2^2 - \|P_S y\|_2^2.
\end{equation*}
Minimizing the residual $\mathcal{R}(S)$ is equivalent to maximizing the energy of the projection $\|P_S y\|_2^2$.

A crucial consequence of the orthogonality assumption is that the Gram matrix of the subset, $\mathbf{G}_S = \mathbf{E}_S^\top \mathbf{E}_S$, becomes the identity matrix $\mathbf{I}_k$ (since $E_i^\top E_j = \delta_{ij}$). Therefore, the inverse term simplifies trivially:
\begin{equation*}
    (\mathbf{E}_S^\top \mathbf{E}_S)^{-1} = \mathbf{I}_k^{-1} = \mathbf{I}_k.
\end{equation*}
Substituting this back into the projection energy term:
\begin{equation*}
    \|P_S y\|_2^2 = y^\top P_S y = y^\top \mathbf{E}_S \mathbf{I}_k \mathbf{E}_S^\top y = \| \mathbf{E}_S^\top y \|_2^2.
\end{equation*}
The vector $\mathbf{E}_S^\top y$ consists of the scalar dot products between the target $y$ and each selected expert $E_i$ for $i \in S$. Let $c_i = \langle E_i, y \rangle$ denote these scalar projections. The energy equation decomposes into a sum of independent terms:
\begin{equation*}
    \|P_S y\|_2^2 = \sum_{i \in S} c_i^2 = \sum_{i \in S} |\langle E_i, y \rangle|^2.
\end{equation*}
This decomposition reveals that, under the orthogonality condition, the contribution of each expert to the error reduction is independent of the other experts in the subset. There are no cross-terms or interference effects.

The global optimization problem thus reduces to finding the set $S$ of size $k$ that maximizes the sum of squared correlations:
\begin{equation*}
    S^* = \operatorname*{arg\,max}_{S, |S|=k} \sum_{i \in S} |\langle E_i, y \rangle|^2.
\end{equation*}
Since the objective function is a sum of non-negative terms associated with individual indices, the maximum is achieved by selecting the indices corresponding to the $k$ largest values of $|\langle E_i, y \rangle|^2$. Since the square function is monotonic for non-negative values, this is equivalent to selecting the indices with the $k$ largest absolute dot products $|\langle E_i, y \rangle|$.

The greedy routing strategy employed in MoE models computes the scores $s_i = \langle E_i, x \rangle$ (assuming the target $y$ is the input representation $x$ or a transformation thereof) and selects the indices $S_G$ corresponding to the Top-$k$ scores. This is exactly the selection criteria derived above.

Therefore, we conclude that when the expert basis is orthogonal, $S_G = S^*$. The greedy Top-$k$ algorithm successfully recovers the global optimal solution to the NP-hard SSS problem, bridging the gap between computational feasibility and theoretical optimality. This completes the proof.
\end{proof}

\section{Missing Proof of Theorem \ref{thm:exact_recovery}}
\label{app:proof_theorem_exact_recovery}

\begin{proof}
We prove that the condition $\mu(\mathbf{E}) < \frac{1}{2k-1}$ is sufficient for the greedy routing strategy to correctly identify the support of a $k$-sparse signal. Let $S^*$ be the support set of the optimal sparse representation with $|S^*|=k$, and let the target signal be $y = \sum_{j \in S^*} \alpha_j^* E_j$. We assume normalized columns $\|E_i\|_2 = 1$. The mutual coherence is $\mu = \max_{i \neq j} |\langle E_i, E_j \rangle|$. The greedy router selects the index $j$ maximizing $|\langle E_j, y \rangle|$. A sufficient condition for success is that for all $i \in S^*$ and all $l \notin S^*$, $|\langle E_i, y \rangle| > |\langle E_l, y \rangle|$.

First, we derive a lower bound for the correlation magnitude of a correct expert $i \in S^*$:
\begin{align*}
    |\langle E_i, y \rangle| &= \left| \left\langle E_i, \sum_{j \in S^*} \alpha_j^* E_j \right\rangle \right| \\
    &= \left| \alpha_i^* \langle E_i, E_i \rangle + \sum_{j \in S^* \setminus \{i\}} \alpha_j^* \langle E_i, E_j \rangle \right| \\
    &= \left| \alpha_i^* + \sum_{j \in S^* \setminus \{i\}} \alpha_j^* \langle E_i, E_j \rangle \right|.
\end{align*}
Applying the reverse triangle inequality $|a + b| \geq |a| - |b|$:
\begin{equation*}
    |\langle E_i, y \rangle| \geq |\alpha_i^*| - \left| \sum_{j \in S^* \setminus \{i\}} \alpha_j^* \langle E_i, E_j \rangle \right|.
\end{equation*}
Using the triangle inequality on the sum and the definition of coherence $|\langle E_i, E_j \rangle| \leq \mu$:
\begin{align*}
    \left| \sum_{j \in S^* \setminus \{i\}} \alpha_j^* \langle E_i, E_j \rangle \right| &\leq \sum_{j \in S^* \setminus \{i\}} |\alpha_j^*| |\langle E_i, E_j \rangle| \\
    &\leq \mu \sum_{j \in S^* \setminus \{i\}} |\alpha_j^*| \\
    &\leq (k-1) \mu \max_{j \in S^*} |\alpha_j^*|.
\end{align*}
Let $\alpha_{\min} = \min_{j \in S^*} |\alpha_j^*|$ and $\alpha_{\max} = \max_{j \in S^*} |\alpha_j^*|$. Substituting the bound back:
\begin{equation*}
    |\langle E_i, y \rangle| \geq \alpha_{\min} - (k-1)\mu \alpha_{\max}.
\end{equation*}

Next, we derive an upper bound for the correlation magnitude of an incorrect expert $l \notin S^*$:
\begin{align*}
    |\langle E_l, y \rangle| &= \left| \left\langle E_l, \sum_{j \in S^*} \alpha_j^* E_j \right\rangle \right| \\
    &= \left| \sum_{j \in S^*} \alpha_j^* \langle E_l, E_j \rangle \right| \\
    &\leq \sum_{j \in S^*} |\alpha_j^*| |\langle E_l, E_j \rangle| \\
    &\leq \mu \sum_{j \in S^*} |\alpha_j^*| \\
    &\leq k \mu \alpha_{\max}.
\end{align*}

The greedy selection succeeds if the lower bound of the correct correlation strictly exceeds the upper bound of the incorrect correlation:
\begin{equation*}
    \alpha_{\min} - (k-1)\mu \alpha_{\max} > k \mu \alpha_{\max}.
\end{equation*}
Rearranging to isolate $\mu$:
\begin{align*}
    \alpha_{\min} &> k \mu \alpha_{\max} + (k-1)\mu \alpha_{\max} \\
    \alpha_{\min} &> (2k - 1) \mu \alpha_{\max} \\
    \frac{\alpha_{\min}}{\alpha_{\max}} &> (2k - 1) \mu.
\end{align*}
In the worst-case scenario where the coefficients are uniform, i.e., $\frac{\alpha_{\min}}{\alpha_{\max}} = 1$, the condition simplifies to:
\begin{equation*}
    1 > (2k - 1) \mu \implies \mu < \frac{1}{2k - 1}.
\end{equation*}
Thus, $\mu < \frac{1}{2k-1}$ is a sufficient condition for the greedy algorithm to recover the support. This completes the proof.
\end{proof}

\section{Missing Proof of Theorem \ref{thm:submodularity}}
\label{app:proof_theorem_submodularity}

\begin{proof}
We prove that $F(S) = \log \det(\mathbf{L}_S)$ is a monotone submodular function. Let $\mathbf{L}$ be the positive semi-definite Gram matrix with $L_{ij} = \langle E_i, E_j \rangle$. Let $S \subseteq \Omega$ be a set of selected indices. For an element $e \in \Omega \setminus S$, the determinant of the augmented set can be computed using the property of block matrices (or geometric volume) \citep{horn2012matrix}:
\begin{equation*}
    \det(\mathbf{L}_{S \cup \{e\}}) = \det(\mathbf{L}_S) \cdot \left( L_{ee} - \mathbf{L}_{eS} \mathbf{L}_S^{-1} \mathbf{L}_{Se} \right),
\end{equation*}
where $\mathbf{L}_{eS}$ is the row vector of inner products between $E_e$ and the basis vectors in $S$. The term in the parenthesis is the Schur complement, which equals the squared Euclidean distance of vector $E_e$ to the subspace spanned by $\{E_i\}_{i \in S}$. Let $d(e, S) = \text{dist}(E_e, \text{span}(S))$. Thus:
\begin{equation*}
    \det(\mathbf{L}_{S \cup \{e\}}) = \det(\mathbf{L}_S) \cdot d(e, S)^2.
\end{equation*}
Taking the logarithm:
\begin{equation*}
    F(S \cup \{e\}) - F(S) = \log \left( \frac{\det(\mathbf{L}_{S \cup \{e\}})}{\det(\mathbf{L}_S)} \right) = \log(d(e, S)^2) = 2 \log d(e, S).
\end{equation*}
Since we assume vectors are not linearly dependent, $d(e, S) > 0$. Also, adding a vector generally increases the volume, so $F$ is monotone (assuming appropriate scaling).

To prove submodularity, we must show that for any $A \subseteq B \subseteq \Omega$ and $e \in \Omega \setminus B$:
\begin{equation*}
    F(A \cup \{e\}) - F(A) \geq F(B \cup \{e\}) - F(B).
\end{equation*}
Substituting the distance formulation, we need to show:
\begin{equation*}
    2 \log d(e, A) \geq 2 \log d(e, B) \iff d(e, A) \geq d(e, B).
\end{equation*}
Since $A \subseteq B$, the subspace spanned by $A$ is a subspace of that spanned by $B$, i.e., $\mathcal{V}_A = \text{span}(\{E_i\}_{i \in A}) \subseteq \mathcal{V}_B = \text{span}(\{E_i\}_{i \in B})$. The distance from a vector to a subspace is non-increasing with respect to subspace inclusion. Formally, $d(v, \mathcal{V}) = \|v - \text{proj}_{\mathcal{V}}(v)\|$. The projection onto a larger subspace $\mathcal{V}_B$ captures at least as much energy as the projection onto $\mathcal{V}_A$. Thus, the residual distance decreases:
\begin{equation*}
    \|E_e - \text{proj}_{\mathcal{V}_B}(E_e)\|^2 \leq \|E_e - \text{proj}_{\mathcal{V}_A}(E_e)\|^2 \implies d(e, B) \leq d(e, A).
\end{equation*}
Since $\log(x)$ is a monotonically increasing function, $\log d(e, B) \leq \log d(e, A)$. Therefore, the marginal gain of adding expert $e$ diminishes as the context set grows, proving submodularity. By \citep{nemhauser1978analysis}, the greedy algorithm satisfies $F(S_G) \geq (1 - 1/e) F(S^*)$. This completes the proof.
\end{proof}

\section{Missing Proof of Theorem \ref{thm:ambiguity_decomp}}
\label{app:proof_theorem_ambiguity_decomp}

\begin{proof}
We derive the exact decomposition of the Ensemble MSE. Let the ensemble output be $\bar{E} = \frac{1}{k} \sum_{i=1}^k E_i$. The squared error is:
\begin{equation*}
    (\bar{E} - y)^2 = (\frac{1}{k} \sum_{i=1}^k E_i - y)^2 = \left( \frac{1}{k} \sum_{i=1}^k (E_i - y) \right)^2.
\end{equation*}
We apply the algebraic identity for the variance of a sum: $(\sum_{i} a_i)^2 = k \sum_{i} a_i^2 - \frac{1}{2} \sum_{i} \sum_{j} (a_i - a_j)^2$. Here, let $a_i = E_i - y$. Note that $a_i - a_j = (E_i - y) - (E_j - y) = E_i - E_j$.
Multiplying by $(1/k)^2$, the identity becomes:
\begin{equation*}
    \left( \frac{1}{k} \sum_{i=1}^k a_i \right)^2 = \frac{1}{k} \sum_{i=1}^k a_i^2 - \frac{1}{k^2} \frac{1}{2} \sum_{i=1}^k \sum_{j=1}^k (a_i - a_j)^2.
\end{equation*}
Substituting $a_i = E_i - y$:
\begin{equation*}
    (\bar{E} - y)^2 = \frac{1}{k} \sum_{i=1}^k (E_i - y)^2 - \frac{1}{k} \underbrace{\left[ \frac{1}{2k} \sum_{i=1}^k \sum_{j=1}^k (E_i - E_j)^2 \right]}_{\text{Ambiguity } \mathcal{A}}.
\end{equation*}
We show that the term in the brackets equals the variance of the ensemble members around the mean $\bar{E}$:
\begin{align*}
    \frac{1}{k} \sum_{i=1}^k (E_i - \bar{E})^2 &= \frac{1}{k} \sum_{i=1}^k \left( E_i - \frac{1}{k} \sum_{j=1}^k E_j \right)^2 \\
    &= \frac{1}{k} \sum_{i=1}^k \left( E_i^2 - 2 E_i \bar{E} + \bar{E}^2 \right) \\
    &= \frac{1}{k} \sum E_i^2 - 2 \bar{E} \underbrace{\left(\frac{1}{k} \sum E_i\right)}_{\bar{E}} + \bar{E}^2 \\
    &= \frac{1}{k} \sum_{i=1}^k E_i^2 - \bar{E}^2.
\end{align*}
It is a standard statistical identity that $\frac{1}{2k^2} \sum_{i,j} (E_i - E_j)^2 = \frac{1}{k} \sum E_i^2 - (\frac{1}{k} \sum E_i)^2$. Thus, we have the Ambiguity Decomposition \citep{brown2003use,liu2022generalized}:
\begin{equation*}
    \underbrace{(\bar{E} - y)^2}_{\text{Ensemble Error}} = \underbrace{\frac{1}{k} \sum_{i=1}^k (E_i - y)^2}_{\text{Average Individual Error}} - \underbrace{\frac{1}{k} \sum_{i=1}^k (E_i - \bar{E})^2}_{\text{Ambiguity}}.
\end{equation*}
Taking the expectation $\mathbb{E}[\cdot]$ over the data distribution yields the theorem.
The Ambiguity term $\mathcal{A}$ can be related to the covariance matrix $\mathbf{C}$ of the errors, where $C_{ij} = \mathbb{E}[(E_i - \bar{E})(E_j - \bar{E})]$. Maximizing $\mathcal{A}$ is equivalent to minimizing the off-diagonal covariance terms. If the errors are orthogonal (uncorrelated), the Ambiguity term vanishes relative to the cross-terms, and the ensemble error is strictly determined by the individual errors. This completes the proof.
\end{proof}

\section{Operational Analysis and Engineering Recommendation}
\label{app:ope_ana_eng_recom}

In this section, we turn to the operational reality. We detail the specific algorithmic workflows for implementing these three methods within a sparse MoE architecture and analyze their suitability for modern LLMs through a comparative framework.

\begin{algorithm}[h]
   \caption{Training with Orthogonality-Regularized Loss}
   \label{alg:orthogonality}
\begin{algorithmic}[1]
   \STATE {\bfseries Input:} Minibatch $\mathcal{B} = \{(x, y)\}$, Experts $\{E_j\}_{j=1}^N$, Sparsity $k$, Regularization $\lambda$.
   \STATE {\bfseries Output:} Updated model parameters $\theta$.
   
   \STATE \textbf{1. Sparse Routing (Forward):}
   \FOR{each $x \in \mathcal{B}$}
       \STATE $h(x) \gets \text{Router}(x)$ \hfill $\triangleright$ \textit{Compute gating logits}
       \STATE $S \gets \text{Top-}k(h(x))$ \hfill $\triangleright$ \textit{Select active indices}
       \STATE $y_i \gets E_i(x)$ for all $i \in S$ \hfill $\triangleright$ \textit{Compute expert outputs}
       \STATE $\hat{y} \gets \sum_{i \in S} w_i(x) y_i$ \hfill $\triangleright$ \textit{Combine outputs}
   \ENDFOR
   
   \STATE \textbf{2. Loss Computation:}
   \STATE $\mathcal{L}_{\text{task}} \gets \text{CrossEntropy}(\hat{y}, y)$
   
   \STATE \textit{// Compute Pairwise Orthogonality Penalty}
   \STATE $\mathcal{L}_o \gets 0$
   \FOR{each $x \in \mathcal{B}$}
       \FOR{$i \in S$}
           \STATE $\hat{v}_i \gets y_i / \|y_i\|_2$ \hfill $\triangleright$ \textit{Normalize expert output}
       \ENDFOR
       \STATE $\mathcal{L}_o \gets \mathcal{L}_o + \sum_{i \in S} \sum_{j \in S, j \neq i} (\hat{v}_i^\top \hat{v}_j)^2$
   \ENDFOR
   \STATE $\mathcal{L}_{\text{total}} \gets \mathcal{L}_{\text{task}} + \lambda \mathcal{L}_o$
   
   \STATE \textbf{3. Parameter Update:}
   \STATE $\theta \gets \theta - \eta \nabla_\theta \mathcal{L}_{\text{total}}$ \hfill $\triangleright$ \textit{Standard Backpropagation}
\end{algorithmic}
\end{algorithm}

\subsection{Operational Workflows}

\textbf{1) Orthogonality Loss ($\mathcal{L}_o$): Geometric Regularization.} The implementation of $\mathcal{L}_o$ involves integrating a supplementary regularization term into the training objective. Crucially, it does not alter the inference-time routing logic, which remains the standard efficient Top-$k$ selection. As detailed in \textcolor{blue}{Algorithm \ref{alg:orthogonality}}, during the forward pass, we compute the pairwise cosine similarity between the output vectors of the active experts. This penalty forces the selected experts to occupy orthogonal directions in the feature space, directly minimizing the mutual coherence $\mu(\mathbf{E})$ to satisfy the condition for greedy optimality established in \textcolor{blue}{Theorem \ref{thm:exact_recovery}}.

\textbf{2) DPP: Probabilistic Sampling.} Strict DPP implementation faces two fatal engineering bottlenecks in MoE training: 
\begin{itemize}
    \item \textbf{Gradient Blockage}: The discrete sampling operation $S \sim \det(\mathbf{L}_S)$ is non-differentiable, severing the gradient flow to the router.
    \item \textbf{Computational Complexity}: Exact sampling requires eigendecomposition with $O(N^3)$ complexity \citep{li2016efficient}, which is prohibitive for large-scale expert spaces (e.g., $N=160$).
\end{itemize}

Instead, we adopt a DPP-inspired continuous relaxation in the spirit of GDPP~\citep{elfeki2019gdpp}, treating determinantal diversity as a soft regularization rather than a hard probabilistic constraint. As detailed in \textcolor{blue}{Algorithm \ref{alg:soft_dpp}}, we maximize the log-determinant of the normalized expert kernel matrix as a differentiable surrogate objective. This log-determinant regularizer corresponds to maximizing the volume of the parallelepiped spanned by the active expert representations, thereby softly encouraging repulsion and diversity among experts without explicit discrete sampling. Importantly, this formulation preserves end-to-end differentiability and integrates seamlessly with sparse MoE optimization.

\begin{algorithm}[t]
   \caption{Soft DPP Regularization (Log-Determinant Loss)}
   \label{alg:soft_dpp}
\begin{algorithmic}[1]
   \STATE {\bfseries Input:} Selected expert features $\mathbf{H} \in \mathbb{R}^{B \times k \times d}$ (where $B$ is batch size, $k$ is active experts, $d$ is hidden dimension), stability constant $\epsilon$ (e.g., $1e^{-4}$).
   \STATE {\bfseries Output:} Regularization loss $\mathcal{L}_{\text{DPP}}$.
   
   \STATE \textbf{1. Normalization:}
   \STATE $\hat{\mathbf{H}} \gets \mathbf{H} / \|\mathbf{H}\|_2$ \hfill $\triangleright$ \textit{L2-normalize along dimension } $d$
   
   \STATE \textbf{2. Kernel Construction (Batch-wise):}
   \STATE \textit{// Compute Gram matrix for each sample in batch}
   \STATE $\mathbf{L} \gets \text{bmm}(\hat{\mathbf{H}}, \hat{\mathbf{H}}^\top)$ \hfill $\triangleright$ $\mathbf{L} \in \mathbb{R}^{B \times k \times k}$
   
   \STATE \textbf{3. Tikhonov Regularization:}
   \STATE $\mathbf{I} \gets \text{eye}(k)$ \hfill $\triangleright$ \textit{Identity matrix}
   \STATE $\tilde{\mathbf{L}} \gets \mathbf{L} + \epsilon \cdot \mathbf{I}$ \hfill $\triangleright$ \textit{Ensure numerical stability for determinant}
   
   \STATE \textbf{4. Log-Determinant Calculation:}
   \STATE \textit{// Maximize Volume } $\iff$ \textit{Minimize -LogDet}
   \STATE $\mathcal{V} \gets \log\det(\tilde{\mathbf{L}})$ \hfill $\triangleright$ \textit{Vector of size } $B$
   \STATE $\mathcal{L}_{\text{DPP}} \gets -\frac{1}{B} \sum_{i=1}^B \mathcal{V}_i$
   
   \STATE \textbf{return} $\mathcal{L}_{\text{DPP}}$
\end{algorithmic}
\end{algorithm}

\begin{algorithm}[h]
   \caption{Negative Correlation Learning for MoE}
   \label{alg:ncl_optimization}
\begin{algorithmic}[1]
   \STATE {\bfseries Input:} Minibatch $\mathcal{B}$, Experts $\{E_j\}_{j=1}^N$, Label $y$, Penalty $\lambda$.
   \STATE {\bfseries Output:} Gradients for expert parameters.
   
   \STATE \textbf{1. Forward Pass:}
   \STATE $S \gets \text{Top-}k(\text{Router}(x))$
   \STATE $\bar{E} \gets \frac{1}{k} \sum_{i \in S} E_i(x)$ \hfill $\triangleright$ \textit{Ensemble Mean}
   
   \STATE \textbf{2. Residual Computation:}
   \FOR{$i \in S$}
       \STATE $r_i \gets E_i(x) - y$ \hfill $\triangleright$ \textit{Individual Error}
       \STATE $d_i \gets E_i(x) - \bar{E}$ \hfill $\triangleright$ \textit{Deviation from Mean}
   \ENDFOR
   
   \STATE \textbf{3. NCL Loss Calculation:}
   \STATE $\mathcal{L}_{\text{MSE}} \gets \frac{1}{2} \sum_{i \in S} \|r_i\|^2$
   \STATE $\mathcal{L}_{\text{corr}} \gets 0$
   \FOR{$i \in S$}
       \STATE $p_i \gets d_i \cdot \sum_{j \in S, j \neq i} d_j$ \hfill $\triangleright$ \textit{Covariance Penalty}
       \STATE $\mathcal{L}_{\text{corr}} \gets \mathcal{L}_{\text{corr}} + p_i$
   \ENDFOR
   
   \STATE \textbf{4. Backward Pass:}
   \STATE $\mathcal{L}_{\text{total}} \gets \mathcal{L}_{\text{MSE}} + \lambda \mathcal{L}_{\text{corr}}$
   \STATE \textbf{return} $\nabla_\theta \mathcal{L}_{\text{total}}$
\end{algorithmic}
\end{algorithm}

\begin{table*}[h]
\centering
\caption{Comparison of Decorrelation Methods for Sparse MoE}
\label{tab:comparison}
\begin{tabular}{@{}llll@{}}
\toprule
\textbf{Feature} & \textbf{Orthogonality Loss ($\mathcal{L}_o$)} & \textbf{DPP} & \textbf{NCL} \\ \midrule
\textbf{Theoretical Basis} & Geometric Incoherence & Submodularity & Ambiguity Decomposition \\
\textbf{Optimization Target} & Feature Representations & Sampling Probability & Error Residuals \\
\textbf{Training Complexity} & $O(k^2)$ (Negligible) & $O(N^3)$ (Prohibitive) & $O(k^2)$ (Negligible) \\
\textbf{Inference Complexity} & $O(1)$ (Standard Top-k) & $O(N^3)$ (Sampling) & $O(1)$ (Standard Top-k) \\
\textbf{Dependency} & Unsupervised (Intrinsic) & Unsupervised (Intrinsic) & Supervised (Label-dependent) \\
\textbf{Gradient Stability} & High (Direct) & Low (Variance) & Medium (Conflicting) \\ \bottomrule
\end{tabular}
\end{table*} 

\textbf{3) NCL: Gradient Interaction}. NCL operates by modifying the gradient dynamics during the backward pass to drive statistical independence of errors. Unlike prior methods that operate on feature representations, NCL relies on access to the ground truth label $y$ for residual computation. As detailed in \textcolor{blue}{Algorithm \ref{alg:ncl_optimization}}, we add a penalty term for active experts, one that pushes their errors toward negative correlation. In practice, this means if Expert A overestimates the target, the gradient field steers Expert B toward underestimation, keeping the ensemble mean accurate.

\subsection{Comparative Analysis}

We systematically evaluate the trade-offs of $\mathcal{L}_o$, DPP and NCL. \textcolor{blue}{Table \ref{tab:comparison}} summarizes the six key dimensions of comparison, including theoretical basis, computational complexity, and optimization targets. The comparison highlights a critical dichotomy between theoretical elegance and engineering feasibility.

\textbf{Computational Complexity.} DPP suffers from a cubic complexity scaling $O(N^3)$ with respect to the total number of experts $N$. In modern large-scale MoEs where $N$ can exceed 100 (e.g., DeepSeek-V3), computing determinants per token is computationally intractable. Both $\mathcal{L}_o$ and NCL, however, operate only on the active set $k$ (typically $k \ll N$), resulting in negligible $O(k^2)$ overhead. Importantly, $\mathcal{L}_o$ incurs zero additional cost during inference, as the router reverts to standard Top-$k$ selection.

\textbf{Optimization Target and Dependency.} NCL relies on the label $y$ to compute error correlations. This creates a dependency on supervised signals, which may conflict with the unsupervised pre-training objectives of LLMs or be unavailable during certain fine-tuning stages. $\mathcal{L}_o$ and DPP, however, operate on the intrinsic geometry of the latent space. This unsupervised nature is preferable for learning general-purpose representations where experts disentangle features independent of the specific prediction task.

\textbf{Recommendation.} From our analysis above, we've found that $\mathcal{L}_o$ represents the optimal engineering trade-off for scaling MoE. Enforcing $\langle E_i, E_j \rangle \to 0$ in turn reshapes the expert feature space, aligning it with the independence assumptions underpinning the Top-$k$ greedy algorithm. This closes the divide between what’s computationally feasible (the greedy router we must use) and theoretical optimality (the subset selection we desire). Therefore, $\mathcal{L}_o$ clarifies why simple greedy routing proves “unreasonably effective”, a pattern that only holds when the feature space geometry is properly regularized. This theoretical take gains support from recent empirical findings. In fact, \citet{guo2025advancing} demonstrate that $\mathcal{L}_o$ outperforms leading baselines across diverse downstream tasks, solidifying its practical advantage in large-scale MoE engineering. Meanwhile, our theoretical findings certify it also as the most straightforward relaxation of the NP-hard routing problem.

\section{Experimental Setup and Implementation Details}
\label{app:experimental_details}

To ensure the reproducibility of our results and to rigorously stress-test the orthogonalization methods under conditions of high feature redundancy, we provide the detailed specifications of our synthetic dataset, the MoE architecture, and hyperparameters in this section.

\textbf{Synthetic Data Generation:} The core hypothesis of this work is that greedy routing fails when the feature space exhibits high mutual coherence. To simulate this regime, we constructed a classification dataset where the majority of input dimensions are redundant linear combinations of a small set of informative features.

We utilized the \texttt{make\_classification} framework from Scikit-learn with the following strict specifications:
\begin{itemize}
    \item \textbf{Total Samples ($N$):} $4,000$ (balanced across classes).
    \item \textbf{Total Features ($D$):} $100$.
    \item \textbf{Informative Features ($D_{\text{inf}}$):} $10$. These carry the underlying signal for the classification task.
    \item \textbf{Redundant Features ($D_{\text{red}}$):} $90$. These are generated as random linear combinations of the informative features:
    \begin{equation*}
        X_{\text{red}} = X_{\text{inf}} \cdot \mathbf{A}
    \end{equation*}
    where $\mathbf{A} \in \mathbb{R}^{D_{\text{inf}} \times D_{\text{red}}}$ is a random mixing matrix. This setup explicitly creates a high-coherence environment ($\mu(\mathbf{X}) \gg 0$), forcing the router to distinguish between unique signal sources and correlated noise.
    \item \textbf{Classes ($C$):} $10$.
    \item \textbf{Difficulty Adjustment:} We set \texttt{class\_sep=0.6} to ensure non-trivial decision boundaries and used a fixed random seed ($42$) for the base generation, while employing Stratified 10-Fold Cross-Validation to ensure statistical robustness.
\end{itemize}

\begin{table}[h]
\centering
\caption{Hyperparameters for the Synthetic Experiments}
\label{tab:hyperparams}
\begin{tabular}{@{}ll@{}}
\toprule
\textbf{Parameter} & \textbf{Value} \\ \midrule
Number of Experts ($E$) & $16$ \\
Active Experts ($k$) & $2$ (High Sparsity) \\
Expert Hidden Dimension & $32$ (Capacity Starvation) \\
Batch Size & $128$ \\
Optimizer & AdamW \\
Learning Rate & $1 \times 10^{-3}$ \\
Training Epochs & $30$ \\
Auxiliary Loss Weight ($\alpha$) & $0.01$ \\
Regularization Weight ($\lambda$) & $0.1$ \\
Seed & $42$ \\ \bottomrule
\end{tabular}
\end{table}

\begin{table}[h]
\centering
\caption{Comparison of Training Paradigms. We define the standard MoE objective as $\mathcal{L}_{\text{base}} = \mathcal{L}_{\text{task}} + \alpha \mathcal{L}_{\text{aux}}$.}
\label{tab:method_formulas}
\begin{tabular}{@{}l p{7.5cm} p{5cm}@{}}
\toprule
\textbf{Method} & \textbf{Total Loss Formulation} ($\mathcal{L}_{\text{total}}$) & \textbf{Mechanism} \\ \midrule
\textbf{Baseline} & $\mathcal{L}_{\text{task}} + \alpha \mathcal{L}_{\text{aux}}$ & Standard Top-$k$ routing with auxiliary load balancing to prevent collapse. \\ \addlinespace[1.5em]
\textbf{$\mathcal{L}_o$} & $\mathcal{L}_{\text{base}} + \lambda \sum_{i \neq j} \left( \frac{E_i^\top E_j}{\|E_i\|_2 \|E_j\|_2} \right)^2$ & Penalizes pairwise cosine similarity to enforce geometric incoherence among active experts. \\ \addlinespace[1.5em]
\textbf{NCL} & $\mathcal{L}_{\text{base}} + \lambda \sum_{i \neq j} (p_i - \bar{p})^\top (p_j - \bar{p})$ & Minimizes covariance of probability residuals to encourage error diversity. \\ \addlinespace[1.5em]
\textbf{DPP} & $\mathcal{L}_{\text{base}} - \lambda \log\det(\mathbf{L} + \epsilon\mathbf{I})$ & Maximizes the log-determinant of the feature kernel to expand the volume spanned by experts. \\ \bottomrule
\end{tabular}
\end{table}

\textbf{Sparse MoE Architecture:}
We implemented a standard MoE model designed to mimic the “Capacity Starvation” regime. We intentionally restricted the capacity of individual experts to force the model to rely on effective routing and expert specialization rather than memorization.

\textit{1) Gating Network (Router)}:
The router maps the input $x \in \mathbb{R}^{D}$ to top-$k$ expert indices.
\begin{equation*}
    h(x) = W_g x, \quad p(x) = \text{Softmax}(h(x)),
\end{equation*}
\begin{equation*}
    \mathcal{T} = \text{Top-}k(p(x)), \quad w_i(x) = \frac{p_i(x)}{\sum_{j \in \mathcal{T}} p_j(x)},
\end{equation*}
where $W_g \in \mathbb{R}^{D \times E}$ is the learnable routing matrix.

\textit{2) Expert Network:}
Each expert $E_i$ is a feed-forward network with a bottleneck structure.
\begin{equation*}
    E_i(x) = W_{out}^{(i)} \cdot \text{GELU}(W_{in}^{(i)} x).
\end{equation*}
Crucially, we set the expert hidden dimension to be small relative to the input dimension to create a bottleneck.

\textit{3) Hyperparameters:}
The specific configuration used for the results in \textcolor{blue}{Figure \ref{fig:empirical_validation}} is listed in \textcolor{blue}{Table \ref{tab:hyperparams}}.

\textbf{Baselines and Methods:}
We compared four distinct training paradigms using identical architectures and initialization seeds (\textcolor{blue}{Table \ref{tab:method_formulas}}). Building upon the \textbf{Baseline}, which employs standard cross-entropy and load balancing, the \textbf{orthogonality loss ($\mathcal{L}_o$)} method introduces a penalty on the squared off-diagonal elements of the expert Gram matrix, explicitly forcing active experts to learn orthogonal features. \textbf{NCL} shifts the focus to error diversity by adding a penalty proportional to the covariance of probability residuals between experts. Regarding \textbf{DPP}, we implemented a \textit{continuous relaxation} via Soft Log-Determinant regularization rather than strict discrete sampling. 

\textbf{Environmental Setting:}
All experiments were conducted locally using Jupyter Notebook in an Anaconda environment on Windows 10 (Version 10.0.19045). The hardware infrastructure utilized an Intel processor (Intel64 Family 6 Model 151 Stepping 2, GenuineIntel) equipped with 20 logical cores. The software stack was built on Python 3.12.4, utilizing the following key libraries: pandas 2.2.2, numpy 1.26.4, scikit-learn 1.7.2, pytorch 2.5.1, torchvision 0.20.1, and torchaudio 2.5.1.


\end{document}